\documentclass[11pt]{article}

\usepackage[preprint]{acl}

\usepackage{times}
\usepackage{latexsym}

\usepackage[T1]{fontenc}

\usepackage[utf8]{inputenc}

\usepackage{microtype}

\usepackage{inconsolata}

\usepackage{graphicx}
\usepackage{amsmath}
\usepackage{amssymb}
\usepackage{amsthm}
\newtheorem{theorem}{Theorem}
\newtheorem{proposition}[theorem]{Proposition}

\usepackage{booktabs}
\usepackage{colortbl}
\usepackage{multirow}
\usepackage{array}
\usepackage{algorithm}
\usepackage{algpseudocode}
\usepackage[most]{tcolorbox}
\usepackage{tikz}
\usetikzlibrary{arrows.meta,positioning,calc}
\definecolor{bgcolor}{RGB}{242,243,245}

\newcommand{\priorFinal}{3.6017}
\newcommand{\priorFinalStd}{0.0036}
\newcommand{\ercFinal}{3.5881}
\newcommand{\ercFinalStd}{0.0023}
\newcommand{\mainDelta}{-0.0136}
\newcommand{\mainDeltaStd}{0.0049}
\newcommand{\mainWins}{6/6}
\newcommand{\mainT}{-6.76}
\newcommand{\mainSignP}{0.031}
\newcommand{\mainCILo}{-0.0187}   
\newcommand{\mainCIHi}{-0.0084}   
\newcommand{\noiseMean}{0.0039}
\newcommand{\noiseMax}{0.0101}
\newcommand{\noiseRatioMean}{3.4}
\newcommand{\noiseRatioMax}{1.3}
\newcommand{\scPriorMean}{3.7811}
\newcommand{\scErcMean}{3.7756}
\newcommand{\abCapTwelve}{-0.0083}
\newcommand{\abCapFourteen}{-0.0137}
\newcommand{\abCapEighteen}{-0.0215}
\newcommand{\abCapTwentysix}{-0.0234}
\newcommand{\abCapForty}{-0.0232}
\newcommand{\abFloorHalf}{-0.0137}     
\newcommand{\abFloorThirtyFive}{-0.0245}
\newcommand{\abFloorTwentyFive}{-0.0250}
\newcommand{\abFloorFifteen}{-0.0246}
\newcommand{\abCeilOneFive}{-0.0210}
\newcommand{\abCeilTwo}{-0.0205}
\newcommand{\abCapTwoSixLrZeroSeven}{-0.0235}
\newcommand{\abCapOneEightLrZeroSeven}{-0.0219}
\newcommand{\abCapOneEightLrZeroFive}{-0.0109}
\newcommand{\abCapTwoSixLrZeroFive}{-0.0118}
\newcommand{\negDSFaro}{-0.0002}
\newcommand{\negDSFaroWins}{1/3}
\newcommand{\negContKelly}{-0.0006}
\newcommand{\negContKellyWins}{2/3}
\newcommand{\baseAdam}{3.5144}
\newcommand{\baseAdamStd}{0.0225}
\newcommand{\baseMuon}{3.5624}
\newcommand{\baseMuonStd}{0.0015}
\newcommand{\baseNorMuonAndAdam}{3.2794}
\newcommand{\baseNorMuonAndAdamStd}{0.0031}
\newcommand{\faroPriorMean}{3.2780}   
\newcommand{\modelLayers}{11}
\newcommand{\modelDim}{768}
\newcommand{\modelHeads}{6}
\newcommand{\modelVocab}{50257}
\newcommand{\blocksPerRun}{117}
\newcommand{\schedSteps}{1270}
\newcommand{\expSteps}{800}
\newcommand{\mainSeeds}{6}
\newcommand{\mainReps}{2}
\newcommand{\diagShrunkPct}{58.7}
\newcommand{\diagExpandedPct}{40.8}
\newcommand{\diagMeanMult}{0.980}
\newcommand{\diagMultLo}{0.71}
\newcommand{\diagMultHi}{1.40}
\newcommand{\noiseLo}{0.004}
\newcommand{\noiseHi}{0.01}
\newcommand{\regimeScatterDelta}{-0.0056}
\newcommand{\regimeScatterStd}{0.0121}
\newcommand{\regimeScatterWins}{5/6}
\newcommand{\capTwelve}{1.2}\newcommand{\capFourteen}{1.4}
\newcommand{\capEighteen}{1.8}\newcommand{\capTwentysix}{2.6}
\newcommand{\capForty}{4.0}
\newcommand{\ercBeta}{0.9}
\newcommand{\ercWinsor}{3.0}
\newcommand{\ercCapDefault}{1.4}
\newcommand{\ercCapRec}{2.6}
\newcommand{\ercFloorBest}{0.25}

\newcommand{\convAheadFrom}{250}          
\newcommand{\convMaxGap}{-0.0471}         
\newcommand{\convMaxGapStep}{450}
\newcommand{\convFinalGap}{-0.0136}       
\newcommand{\convMeanVal}{-0.0197}        
\newcommand{\convDTwoFifty}{-0.0209}
\newcommand{\convDThreeFifty}{-0.0298}
\newcommand{\convDFourFifty}{-0.0471}
\newcommand{\convDFiveFifty}{-0.0221}
\newcommand{\convDSixFifty}{-0.0191}
\newcommand{\convDEightHundred}{-0.0136}

\newcommand{\cPriorZero}{3.6019}
\newcommand{\cErcZero}{3.5917}
\newcommand{\cDeltaZero}{-0.0102}
\newcommand{\cPriorOne}{3.6081}
\newcommand{\cErcOne}{3.5850}
\newcommand{\cDeltaOne}{-0.0231}
\newcommand{\cPriorTwo}{3.5997}
\newcommand{\cErcTwo}{3.5865}
\newcommand{\cDeltaTwo}{-0.0131}
\newcommand{\cPriorSeven}{3.6001}
\newcommand{\cErcSeven}{3.5879}
\newcommand{\cDeltaSeven}{-0.0122}
\newcommand{\cPriorFortyTwo}{3.6036}
\newcommand{\cErcFortyTwo}{3.5904}
\newcommand{\cDeltaFortyTwo}{-0.0132}
\newcommand{\cPriorTenTwentyFour}{3.5968}
\newcommand{\cErcTenTwentyFour}{3.5872}
\newcommand{\cDeltaTenTwentyFour}{-0.0096}
\newcommand{\scPriorZero}{3.7956}
\newcommand{\scErcZero}{3.7690}
\newcommand{\scDeltaZero}{-0.0266}
\newcommand{\scPriorOne}{3.7898}
\newcommand{\scErcOne}{3.7871}
\newcommand{\scDeltaOne}{-0.0027}
\newcommand{\scPriorTwo}{3.7699}
\newcommand{\scErcTwo}{3.7811}
\newcommand{\scDeltaTwo}{+0.0113}
\newcommand{\scPriorSeven}{3.7798}
\newcommand{\scErcSeven}{3.7742}
\newcommand{\scDeltaSeven}{-0.0056}
\newcommand{\scPriorFortyTwo}{3.7907}
\newcommand{\scErcFortyTwo}{3.7853}
\newcommand{\scDeltaFortyTwo}{-0.0054}
\newcommand{\scPriorTenTwentyFour}{3.7610}
\newcommand{\scErcTenTwentyFour}{3.7567}
\newcommand{\scDeltaTenTwentyFour}{-0.0043}

\newcommand{\baseAdamN}{4}
\newcommand{\baseAdamPerSeed}{3.5083,\,3.4994,\,3.4973,\,3.5528}   
\newcommand{\baseMuonN}{3}
\newcommand{\baseMuonPerSeed}{3.5603,\,3.5633,\,3.5635}            
\newcommand{\baseNorMuonAndAdamN}{3}
\newcommand{\baseNorMuonAndAdamPerSeed}{3.2838,\,3.2774,\,3.2769}
\newcommand{\faroPriorStd}{0.0008}          
\newcommand{\faroPriorPerSeed}{3.2775\,3.2774\,3.2791}   
\newcommand{\faroPriorN}{3}

\newcommand{\costMem}{42{,}495}                  
\newcommand{\msAdZero}{397.0}\newcommand{\msErcZero}{393.2}
\newcommand{\msAdOne}{393.7}\newcommand{\msErcOne}{399.3}
\newcommand{\msAdTwo}{392.3}\newcommand{\msErcTwo}{398.3}
\newcommand{\msAdSeven}{395.9}\newcommand{\msErcSeven}{398.7}
\newcommand{\msAdFortyTwo}{392.6}\newcommand{\msErcFortyTwo}{394.5}
\newcommand{\msAdTenTwentyFour}{396.5}\newcommand{\msErcTenTwentyFour}{397.0}

\newcommand{\msAdMean}{394.7}\newcommand{\msAdStd}{1.9}
\newcommand{\msErcMean}{396.8}\newcommand{\msErcStd}{2.3}

\newcommand{\ohAdamTps}{569{,}045.0}\newcommand{\ohAdamTrain}{595.422}\newcommand{\ohAdamWall}{635.088}
\newcommand{\ohAdamAlloc}{33{,}747}\newcommand{\ohAdamRes}{40{,}820}
\newcommand{\ohMuonTps}{544{,}056.2}\newcommand{\ohMuonTrain}{622.770}\newcommand{\ohMuonWall}{663.049}
\newcommand{\ohMuonAlloc}{34{,}528}\newcommand{\ohMuonRes}{41{,}894}
\newcommand{\ohNmTps}{607{,}164.2}\newcommand{\ohNmTrain}{558.039}\newcommand{\ohNmWall}{597.875}
\newcommand{\ohNmAlloc}{39{,}435}\newcommand{\ohNmRes}{46{,}892}
\newcommand{\ohFaroTps}{574{,}519.6}\newcommand{\ohFaroTrain}{589.748}\newcommand{\ohFaroWall}{629.567}
\newcommand{\ohFaroAlloc}{42{,}489}\newcommand{\ohFaroRes}{51{,}654.7}

\newcommand{\ohVsNmTrainPct}{5.7}
\newcommand{\ohVsNmMemPct}{7.7}

\title{FARO: Function-Space Adaptive Return-to-Risk Optimization via
Cross-Block Equal-Risk-Contribution Budget Allocation}

\author{
\bf Xiaojun Wu$^{\ast\ 1,2}$,
Cehao Yang$^{\ast\ 1,2}$,
Xuhui Jiang$^{1,3}$,  \\
\bf Chengjin Xu$^{1,3}$,
Jia Li$^{\dagger\ 2}$,
Jian Guo$^{\dagger\ 1}$
\\
\\
$^{1}$IDEA Research \\
$^{2}$The Hong Kong University of Science and Technology (Guangzhou) \\
$^{3}$DataArcTech Ltd.
}

\begin{document}
\maketitle
{
  \renewcommand{\thefootnote}%
    {\fnsymbol{footnote}}
  \footnotetext[1]{Equal Contribution}
  \footnotetext[2]{Corresponding Author}
}
\begin{abstract}
Training a large language model is compute-bound, and current recipes divide the
work between optimizers by hand: orthogonalized momentum for attention and MLP
matrices, Adam-style updates for the rest. This division rests on human priors
about which layer type deserves which geometry, and it fixes a single global
update budget that every parameter block shares. We introduce FARO, a
function-space adaptive optimizer that makes the partition of parameters into
blocks the unit of decision. FARO gives each block a geometry by its realized
counterfactual return, then freezes the routes and reallocates the frozen
per-block budget by cross-block equal risk contribution (ERC), the risk-parity
principle from portfolio theory. On a GPT-style model trained on FineWeb, the
framework gives two results. First, its FARO-Adaptive arm reaches 3.2780 nats at
the full schedule, matching NorMuonAndAdam (3.2794), the best hand-designed
recipe: it recovers the human prior without hand-setting per-layer geometry.
Second, FARO-ERC improves on that arm: it lowers the clean-data loss at every one
of 6 seeds, by 0.0136 nats on average, a modest but robust gain. Its sign test,
paired test, and confidence interval all agree, and it exceeds the measured
run-to-run noise at no added time or memory. It works by trimming the riskiest
blocks rather than expanding the safest, and it holds up under a corrupted
stream, still lowering the mean loss and winning the seed majority.
The source code is available at \url{https://github.com/wxj630/FARO}.

\end{abstract}

\section{Introduction}
\label{sec:intro}

Optimizers decide how fast a language model reaches a given loss, and current
practice spends that choice by hand. A standard large-scale recipe
\citep{modded_nanogpt_2024,jordan2024muon,liu2025muon} gives attention and MLP
matrices an orthogonalized-momentum rule such as Muon. It gives embeddings,
normalization scales, and the output head an Adam-style rule
\citep{adam2015kingma,loshchilov2017decoupled}, then applies one global learning
rate to the whole model. Two implicit assumptions hold this recipe together.
First, the layer type determines the right geometry: matrices deserve
spectral-norm descent, vectors deserve coordinate-wise adaptivity. Second, a
single scalar budget is an adequate summary of how much update the network should
receive at each step. Both assumptions come from human priors and small ablation
grids, not from measurement on the model being trained.

Recent matrix optimizers weaken neither assumption. Muon and its extensions
replace one component of the recipe with a better-conditioned update, and report
faster convergence at matched compute
\citep{jordan2024muon,liu2025muon,essentialai2025practical,wen2025fantastic}.
The family keeps growing: normalization-aware rescaling \citep{li2025normuon},
norm-constrained oracles \citep{pethick2025training}, rank shrinkage
\citep{ahn2025dion2}, rotation-invariant coordinates \citep{gong2026aro},
learning-rate selection from gradient geometry \citep{lau2025polargrad}, and
curvature-aware preconditioning \citep{he2025low}. Each commits to one geometry
and one update scale for every matrix it touches, and each is evaluated as a
drop-in replacement in the same hand-split recipe. Choosing among them becomes a
benchmark race whose winner is model-specific, while the assignment of geometry to
block is still made by hand.

We take the assignment itself as the object to be learned. We present
\textbf{FARO}, a function-space adaptive optimizer that decides, per parameter
block, both the geometry of the update and the share of a fixed total update
budget that the block receives. FARO is a framework rather than a new update
rule: Muon, NorMuon, and Adam-style rules are candidate geometries inside it, so
the family above becomes a set of arms FARO can select from instead of a set of
competitors. Figure~\ref{fig:motivation} contrasts this with current practice.

\begin{figure*}[t]
  \centering
  \includegraphics[width=0.98\textwidth]{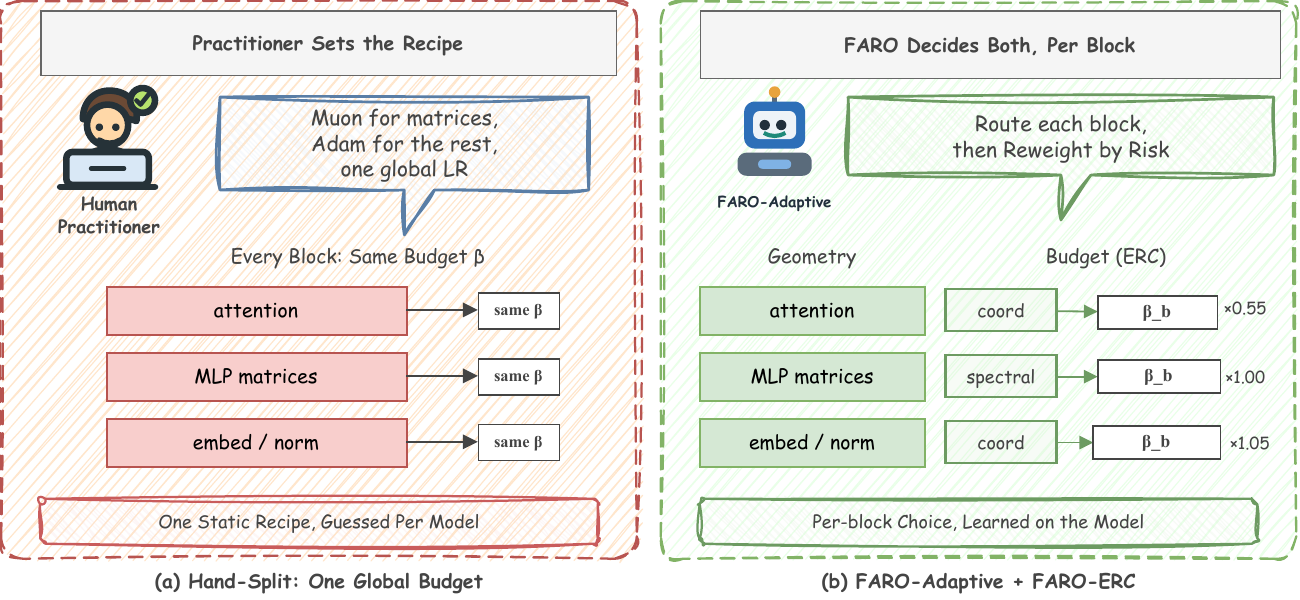}
  \caption{The decision FARO makes, contrasted with current practice. (a) A
  practitioner hand-assigns a geometry to each layer type and applies one
  global learning rate, so every block receives the same budget regardless of
  the update it can absorb. (b) FARO chooses each block's geometry by its
  realized return and reallocates the frozen budget across blocks by risk
  parity, holding the total fixed.}
  \label{fig:motivation}
\end{figure*}

The first decision is \emph{routing}, which measures the return side of the
trade-off. FARO partitions the parameters into \blocksPerRun{} blocks. For each
block it applies every candidate geometry to a held-out counterfactual probe and
keeps the one with the largest loss decrease. Routes change while the comparison
is informative and freeze once the ranking settles, which bounds the search cost
and removes routing as a source of variance. Appendix~\ref{app:theory} shows this
per-block decision composes with the joint one under a mild separability
assumption, with a bound on the coupling it ignores.

The second decision is \emph{allocation}, the risk side of the trade-off and
where the measurement gets interesting. After the routes freeze, each block still
carries a budget $\beta_b$ that scales its update. The obvious controller fails.
Three scalar controllers rescale one global budget: a fractional-Kelly multiplier
\citep{kelly1956}, its continuous relaxation, and a difficulty-sensed rule. All
three are flat against the risk-neutral FARO-Adaptive arm, which matches the earlier finding
that risk-seeking step expansion is unproductive \citep{moreira2017vol}. The
failure is structural. A scalar controller multiplies every $\beta_b$ by a
common factor, so it only moves along the diagonal of the budget simplex. It
cannot represent \emph{which} blocks deserve more or fewer updates. We therefore
reallocate across blocks at fixed total budget, taking the rule from portfolio
theory. Cross-block equal risk contribution (ERC), or risk parity
\citep{maillard2010erc,roncalli2013}, sets weights so each asset contributes the
same share of realized portfolio risk. Map a block to an asset and its
update-to-weight ratio to a volatility, and ERC drives each block toward an equal
share of update risk. Raw per-block risk spans orders of magnitude, so we winsorize
it against the cross-block median, cap the multiplier, and solve for the scale
that keeps the total budget unchanged to numerical precision. The reallocation
reads only the previous step's risk, so the current gradient never influences the
budget that acts on it.

\paragraph{Contributions.}
\begin{itemize}
  \item We formulate optimizer choice as a function-space decision and build
  FARO, in which geometry and budget are separate per-block decisions. Routing by
  realized counterfactual return makes the choice measurable on the model at hand
  rather than assumed from layer type, so existing matrix optimizers enter as
  candidate arms. We show that this per-block decision composes with the joint
  decision over all blocks under a mild separability assumption, with a bounded
  gap (Appendix~\ref{app:theory}).
  \item We identify cross-block reallocation as the axis a scalar controller
  cannot reach, and adapt equal risk contribution to the frozen-budget problem.
  Winsorization, a capped multiplier, and a bisection solve make the reallocation
  bounded and exactly budget-neutral, so the comparison against FARO-Adaptive
  isolates allocation from a global rescaling of the effective learning rate. We
  show that the bisection returns the exact optimizer of a separable risk program
  whose budget coupling is carried by one scalar price, so equal risk contributions
  follow from the solve rather than being imposed (Appendix~\ref{app:theory}).
  \item On a GPT-style model trained on FineWeb, the framework reproduces the
  human-prior recipe without hand-setting per-layer geometry. Its FARO-Adaptive arm
  reaches \faroPriorMean{} nats, matching NorMuonAndAdam (\baseNorMuonAndAdam), the
  strongest hand-designed optimizer in our comparison. FARO-ERC then lowers the
  clean-data final validation loss at \mainWins{} seeds by \mainDelta{} nats at
  matched compute. The gain is modest but robust: the win rate, paired test,
  confidence interval, and repetition gap all agree, and adding it costs no extra
  time or memory.
\end{itemize}

Section~\ref{sec:related} places FARO against matrix optimizers and risk-based
controllers. Section~\ref{sec:method} develops the allocation problem, the
per-block risk signal, and the ERC solve. Section~\ref{sec:experiments} reports
the main comparison, the ablations that localize the mechanism, the scalar
negative controls, and the controller's robustness under a corrupted stream.

\section{Related Work}
\label{sec:related}

\paragraph{Matrix-based and adaptive optimizers.}
AdamW remains the default for language-model pretraining: a coordinate-wise
adaptive scale \citep{duchi2011adaptive,adam2015kingma} with decoupled weight
decay \citep{loshchilov2017decoupled}. A second line constrains the update of a
linear layer with a matrix norm. Muon orthogonalizes the momentum through a
Newton--Schulz iteration \citep{jordan2024muon}, an instance of normalized
steepest descent under the spectral norm
\citep{bernstein2024old,cesista2024firstordernormedopt}; it has since been scaled
to billion-parameter models \citep{liu2025muon} and measured against AdamW at
matched compute \citep{essentialai2025practical,wen2025fantastic}. Two-sided
preconditioning predates this line in Shampoo and SOAP
\citep{gupta2018shampoo,vyas2024soap}, and second-order or variance-reduced
variants continue to appear \citep{martens2015optimizing,liu2023sophia,
yuan2024mars,zhao2024galore,morwani2024new}. Recent work adjusts the update
rather than replacing it. NorMuon rescales by the update RMS implied by the
parameter shape \citep{li2025normuon}; Scion studies general norm-constrained
linear-minimization oracles \citep{pethick2025training}; Dion2 shrinks the update
rank \citep{ahn2025dion2}; ARO rotates the update coordinates
\citep{gong2026aro}; PolarGrad derives the learning rate from gradient geometry
\citep{lau2025polargrad}; and Newton--Muon right-preconditions by the input
second moment \citep{he2025low}. Every method in this family commits to one
geometry for all matrices it optimizes, and inherits the hand-split recipe that
decides which parameters receive it. FARO changes that interface. The geometry per
block becomes a decision made on the model at hand, so these methods enter as
candidate arms rather than as competing replacements.

A complementary line asks why discarding singular values helps. Bernstein and
Newhouse cast optimizer design as steepest descent under a chosen norm
\citep{bernstein2024old}, and the spectral reading of Muon was developed by
Cesista \citep{cesista2024firstordernormedopt,muoncase2024cesista} and related
analyses \citep{riabinin2025gluon,chen2025muon,carlson2015preconditioned}.
One-step and isotropic-curvature analyses study which directions descend fastest
\citep{su2025isotropic,davis2025spectral}, with convergence and implicit-bias
results extending this to stochastic settings \citep{li2025muon,
sato2025analysis,fan2025implicit}. Our work is orthogonal: we claim no optimal
geometry, taking the candidate set as given and asking how to spend the budget.

\paragraph{Budgets, risk, and schedules.}
How much update to apply is normally set by a scalar schedule, from step-size
decay to warmup and cooldown \citep{kaplan2020scalinglawsneurallanguage,
hoffmann2022trainingcomputeoptimallargelanguage}. Scalar control has a financial
analogue in bet sizing. The Kelly criterion sets a fraction of capital from an
estimated edge \citep{kelly1956}, and volatility management rescales exposure by
recent risk \citep{moreira2017vol}. Risk parity instead allocates \emph{across}
assets so that each contributes an equal share of portfolio risk
\citep{maillard2010erc,roncalli2013}. Hierarchical variants relax the equal-risk
target \citep{lopezdeprado2016hrp}, and mean--variance allocation is the
classical foundation \citep{markowitz1952}. We transfer the cross-sectional
version to training: blocks are the assets, the update-to-weight ratio is the
risk, and the frozen budget is the capital reallocated at fixed total, while our
negative controls instantiate the scalar side of the analogy. Per-group
configurations such as separate learning rates for matrices, embeddings, and
biases are common, but they are set by convention and stay static. FARO learns
both the per-block geometry and the per-block share, and it constrains the second
so the total stays invariant.

\section{Method}
\label{sec:method}

FARO frames optimizer choice as a return-to-risk decision and splits it into two
decisions acting at different times: each block's geometry (measured by the
return it realizes) and its share of the update budget (allocated by the risk it
contributes). Figure~\ref{fig:architecture} shows the two lanes and how they
interact: routing settles and freezes, then allocation reweights the frozen
budgets at every step. This section states the problem
(\S\ref{sec:method:setup}), defines the per-block risk signal
(\S\ref{sec:method:risk}), and derives the equal-risk-contribution reallocation
with its budget-neutral solve (\S\ref{sec:method:erc}). Our terms are fixed
throughout: a \emph{block} is the unit FARO acts on, a \emph{geometry} is the
update rule a block runs, and a \emph{budget} scales a block's update.

\begin{figure*}[t]
  \centering
  \includegraphics[width=0.99\textwidth]{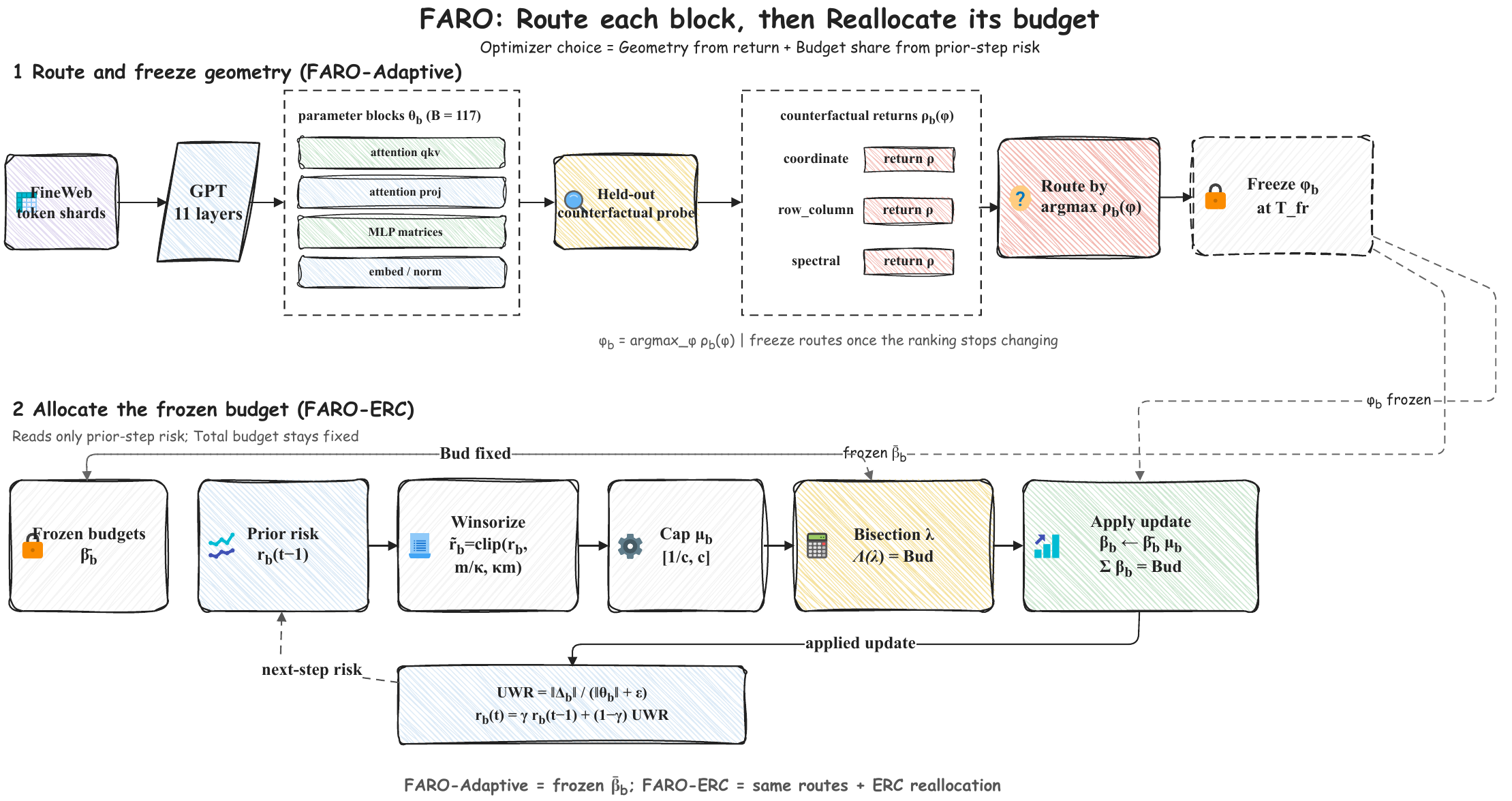}
  \caption{FARO. Lane 1 routes every parameter block by comparing candidate
  geometries on a held-out counterfactual probe, then freezes the routes. Lane 2
  reallocates the frozen budgets: it measures each block's update-to-weight
  ratio, winsorizes and caps it, and solves for the scale $\lambda^\star$ that
  preserves the total budget exactly. The frozen budgets flow down from the
  routing lane; the risk signal is refreshed upward from the applied step.}
  \label{fig:architecture}
\end{figure*}

\subsection{Function-space routing and the frozen-budget problem}
\label{sec:method:setup}

Let the parameters be partitioned into $B$ blocks,
$\theta = (\theta_1, \dots, \theta_B)$, with $B = \blocksPerRun$ in our model;
Appendix~\ref{app:setup} lists the notation. A block is the coarsest unit FARO can
act on. It is an attention projection bank, an MLP up or down matrix, an
embedding, or a group of normalization and bias parameters. For each block, FARO
chooses a
\emph{geometry} $\phi_b$ from a candidate set
$\{\texttt{coordinate}, \texttt{row\_column}, \texttt{spectral}\}$, which
corresponds to a coordinate-wise adaptive rule, a row-and-column rescaled matrix
rule \citep{li2025normuon}, and an orthogonalized matrix rule
\citep{jordan2024muon}. The candidate set is open: NorMuon, Dion2, and
preconditioned variants can be registered as arms without changing the rest of
the method \citep{li2025normuon,ahn2025dion2,he2025low}.

Routing is decided by measurement. For a candidate update $u_b(\phi)$ on block
$b$, FARO applies it to a held-out counterfactual probe and records the realized
\emph{return}
\begin{equation}
  \rho_b(\phi) \;=\; \mathcal{L}_{\mathrm{probe}}(\theta)
  \;-\; \mathcal{L}_{\mathrm{probe}}\big(\theta - \eta\,u_b(\phi)\big),
  \label{eq:return}
\end{equation}
the probe loss the step would remove. Block $b$ is routed to
$\phi_b = \arg\max_{\phi} \rho_b(\phi)$. Routes are held fixed once the ranking
stops changing. Freezing matters for two reasons. It bounds the cost of an
otherwise continuous search, and it removes geometry selection as a source of
run-to-run variance, so the rest of training can be attributed to the budget
decision alone. Appendix~\ref{app:theory} bounds how far this per-block routing
view can be from the joint one. It also shows the allocation solve below is exact
for its own risk program, with the budget coupling carried by one scalar price.

After the freeze time $T_{\mathrm{fr}}$, geometry and a reference budget
$\bar\beta_b$ are fixed, and each step applies
\begin{equation}
  \theta_b \;\leftarrow\; \theta_b \;-\; \alpha\,\beta_b\, d_b(\phi_b),
  \qquad b = 1, \dots, B,
  \label{eq:update}
\end{equation}
where $\alpha$ is the base learning rate and $d_b(\phi_b)$ is the direction the
frozen geometry induces. We call $\beta_b$ the block's budget. The \emph{FARO-Adaptive} arm takes $\beta_b$ equal to the geometry-derived reference value and never
changes it; it is the risk-neutral reference point for every comparison in
Section~\ref{sec:experiments}.

We treat the budgets as an allocation problem: holding the routing and the freeze
timeline fixed, choose $\{\beta_b\}$ to lower the expected final loss subject to
a \emph{fixed total},
\begin{equation}
  \sum_{b=1}^{B} \beta_b \;=\; \sum_{b=1}^{B} \bar\beta_b \;=:\; \mathrm{Bud}.
  \label{eq:budgetconstraint}
\end{equation}
Constraint~\eqref{eq:budgetconstraint} is the design decision that separates
FARO's allocation from a schedule. A scalar controller relaxes it by multiplying
every budget by a common factor and so moves along the diagonal of the budget
simplex; the allocation we build keeps the total fixed and moves mass
\emph{between} blocks.

\paragraph{Why the constraint matters.}
Write $h_b(\beta_b)$ for block $b$'s per-step loss decrease, with block risk
growing in $\beta_b$. Moving a unit of budget from block $j$ to block $i$ gains
about $h_i'(\beta_i)-h_j'(\beta_j)$ against an added variance from the enlarged
block. A scalar controller forces $\beta_i=\beta_j$ for every pair, so it cannot
reach that term even when marginal returns differ. That is why we allocate by risk.

\subsection{Per-block risk}
\label{sec:method:risk}

The allocation needs a per-block quantity that plays the role of volatility. We
use the \emph{update-to-weight ratio}, a scale-free measure of how large a block's
update is relative to the parameter it acts on,
\begin{equation}
  \widehat{\mathrm{uwr}}_b^{(t)}
  \;=\; \frac{\lVert \Delta_b^{(t)} \rVert_2}
              {\lVert \theta_b^{(t)} \rVert_2 + \epsilon},
  \qquad
  \Delta_b^{(t)} = \alpha\,\beta_b\, d_b(\phi_b),
  \label{eq:uwr}
\end{equation}
smoothed into an exponentially weighted risk signal
\begin{equation}
  r_b^{(t)} \;=\; \gamma\, r_b^{(t-1)} + (1-\gamma)\,
  \widehat{\mathrm{uwr}}_b^{(t)},
  \label{eq:risk}
\end{equation}
initialized at $r_b^{(0)} = \widehat{\mathrm{uwr}}_b^{(0)}$, with decay
$\gamma$. The ratio is a relative displacement. A block is low-risk in
portfolio terms when its parameters are large next to the step it receives, and
high-risk when it is moved hard relative to its own scale. The ratio is
scale-free, so $r_b$ is comparable across blocks of different shapes and
geometries, which is what a cross-sectional rule needs. We measure it once per
frozen step. The allocation at step $t$ uses $r_b^{(t-1)}$, so the gradient the
step will apply never sets its own budget.

\subsection{Cross-block equal-risk-contribution allocation}
\label{sec:method:erc}

Equal risk contribution sets weights so that each asset contributes an equal
share of total portfolio risk \citep{maillard2010erc,roncalli2013}. Reading
$\beta_b$ as a portfolio weight and $r_b$ as a volatility, the risk contributed by
block $b$ is $\beta_b r_b$, and equalizing contributions gives
$\beta_b \propto 1/r_b$. Two properties force changes. Raw $r_b$ spans orders of
magnitude, since scalar, normalization, and bias blocks have near-zero parameter
norms, so an unbounded inverse would hand almost the whole budget to a few tiny
blocks; and an unbounded multiplier would let the allocation dominate the frozen
geometry. We bound both ends.

\paragraph{Winsorization.}
Let $m = \mathrm{median}_b\, r_b$ be the cross-block median risk, and define the
winsorized risk
\begin{equation}
  \tilde r_b \;=\; \mathrm{clip}\!\Big(r_b,\; \tfrac{m}{\kappa},\; \kappa\, m\Big),
  \label{eq:winsor}
\end{equation}
with winsor bound $\kappa$. This keeps every block inside a bounded factor of the
median, so a block can be judged relatively risky without being allowed to
collapse the allocation. The median, rather than the mean, is the reference
because a few extreme blocks would otherwise set the scale.

\paragraph{Budget neutrality.}
Ignoring bounds for a moment, the uncapped multiplier is
$\lambda/\tilde r_b$ for a single scale $\lambda > 0$. The exact budget
constraint~\eqref{eq:budgetconstraint} pins that scale in closed form,
\begin{equation}
  \lambda \;=\;
  \frac{\sum_{j} \beta_j}{\sum_{j} \beta_j / \tilde r_j},
  \label{eq:lambda}
\end{equation}
because then $\sum_b \beta_b \lambda/\tilde r_b = \sum_b \beta_b$. We then cap
the multiplier into $[c^{-1}, c]$,
\begin{equation}
  \mu_b \;=\; \mathrm{clip}\!\Big(\tfrac{\lambda}{\tilde r_b},\; c^{-1},\; c\Big),
  \qquad
  \beta_b \leftarrow \beta_b\,\mu_b ,
  \label{eq:erc}
\end{equation}
with allocation cap $c$.

\paragraph{Solving the capped system.}
Capping and neutrality interact: renormalising after clamping would break the
cap, while clamping after the closed-form solve would break neutrality. We
therefore solve for $\lambda$ on the monotone map
\begin{equation}
  \Lambda(\lambda) \;=\; \sum_{b} \bar\beta_b\,
  \min\!\Big(\max\!\big(\tfrac{\lambda}{\tilde r_b},\, c^{-1}\big),\, c\Big),
  \label{eq:lambda_fn}
\end{equation}
The map is non-decreasing in $\lambda$, so bisection gives the unique
$\lambda^\star$ with $\Lambda(\lambda^\star) = \mathrm{Bud}$. The allocation then
respects the cap \emph{and} preserves the total to numerical precision, so the arms
cannot differ by a global rescaling of the effective learning rate. Reallocation
still changes how that fixed total is spread, so whether it also shifts the realized
update norm is separate; the cap-by-LR ablation
(Section~\ref{sec:exp:abl}) settles it empirically. We may also clamp the product to
ERC-specific bounds
$[\beta^{\mathrm{erc}}_{\min}, \beta^{\mathrm{erc}}_{\max}]$, which are
decoupled from the global routing bounds so that tightening one does not move the
other.

\begin{algorithm}[t]
  \caption{ERC frozen-budget allocation (one frozen step)}
  \label{alg:erc}
  \begin{algorithmic}[1]
    \Require frozen budgets $\{\bar\beta_b\}$, risk EWMA $\{r_b\}$,
             cap $c$, winsor $\kappa$, ERC clamp
             $[\beta^{\mathrm{erc}}_{\min},\beta^{\mathrm{erc}}_{\max}]$,
             decay $\gamma$
    \Ensure effective budgets $\{\beta_b\}$ and multipliers $\{\mu_b\}$
    \State $\mathrm{Bud} \gets \sum_b \bar\beta_b$
    \State $m \gets \mathrm{median}_b\, r_b$
    \For{$b = 1,\dots,B$}
      \State $\tilde r_b \gets \mathrm{clip}(r_b,\; m/\kappa,\; \kappa m)$
        \Comment{winsorize risk}
    \EndFor
    \State define $\Lambda(\lambda) \gets
           \sum_b \bar\beta_b\,\mathrm{clip}(\lambda/\tilde r_b,\; c^{-1},\; c)$
    \State $\lambda^\star \gets$ bisection root of $\Lambda(\lambda)=\mathrm{Bud}$
           \Comment{budget-neutral scale}
    \For{$b = 1,\dots,B$}
      \State $\mu_b \gets \mathrm{clip}(\lambda^\star/\tilde r_b,\; c^{-1},\; c)$
      \State $\beta_b \gets \mathrm{clip}(\bar\beta_b\,\mu_b,\;
             \beta^{\mathrm{erc}}_{\min},\; \beta^{\mathrm{erc}}_{\max})$
    \EndFor
    \State \Return $\{\beta_b\},\ \{\mu_b\}$
  \end{algorithmic}
\end{algorithm}

\paragraph{What the method leaves untouched, and its cost.}
Algorithm~\ref{alg:erc} runs only after every route has frozen, leaving the
geometries, router scores, freeze condition, and learning-rate schedule unchanged.
It reads only prior-step risk, so it cannot leak the current gradient into the
allocation; with ERC disabled it is the identity ($\mu_b \equiv 1$) and FARO-Adaptive
is recovered exactly. A solve costs $O(B)$ risk measurements plus the bisection and
adds no parameters or optimizer state, which Table~\ref{tab:main} confirms by
measurement.

\section{Experiments}
\label{sec:experiments}

\subsection{Setup}
\label{sec:exp:setup}

\paragraph{Model and data.}
We train a GPT-style transformer with \modelLayers{} layers, width \modelDim{},
\modelHeads{} heads, and vocabulary \modelVocab{} on FineWeb token shards
\citep{penedo2024fineweb}, following the reference trainer
\citep{modded_nanogpt_2024}. Runs last \expSteps{} steps and we report the final
validation cross-entropy (lower is better); Appendix~\ref{app:setup} lists the
model, optimizer defaults, and schedule.

\paragraph{Two data regimes.}
We evaluate under clean FineWeb, the default protocol, and a \emph{scatter} regime
that replaces $60\%$ of tokens at random positions (per-microbatch hash, so
difficulty is not temporally clustered), testing the controller when per-block risk
is noisier.

\paragraph{Protocol.}
Every comparison is a matched pair: the pair shares the seed, data order, geometry
routing, and freeze timeline, so it differs only in the budget controller. On clean
data we run \mainSeeds{} seeds $\times$ \mainReps{} repetitions per arm; on scatter,
\mainSeeds{} seeds. The run-to-run noise floor is $\noiseLo$--$\noiseHi$ nats,
measured on repeated identical configurations, and we use it as the reference scale
throughout.

\paragraph{Arms and notation.}
The \textbf{FARO-Adaptive} arm is FARO with its geometry-derived frozen budgets and no
risk controller; the \textbf{FARO-ERC} arm adds the reallocation of
Algorithm~\ref{alg:erc} on top, at the same total budget. Every ablation reports
$\Delta = $ loss(arm) $-$ loss(FARO-Adaptive) on matched pairs, so negative favours the
controller.

\subsection{Main result}
\label{sec:exp:main}

\begin{table*}[t]
  \centering
  \caption{Main results: final validation loss at \expSteps{} steps under a
  matched-pair protocol (identical seed, data order, geometry routing, and
  freeze timeline; only the budget controller differs), plus the per-run cost the
  controller adds (peak allocated memory in MiB, mean per-step time in ms).
  $\Delta = $ FARO-ERC $-$ FARO-Adaptive (negative is better). Per-arm loss and
  cost are $\pm$ population std over seeds; $\Delta$ carries the paired sample std
  used by the tests in Section~\ref{sec:exp:stats}. Lower is better throughout.}
  \label{tab:main}
  \small
  \setlength{\tabcolsep}{3.2pt}
  \begin{tabular}{l@{\;}cccc@{\;\;}ccc@{\;\;}cccc}
    \toprule
    & \multicolumn{4}{c}{\textbf{clean FineWeb} (6 seeds $\times$ 2 reps)}
    & \multicolumn{3}{c}{\textbf{cost} (clean)}
    & \multicolumn{4}{c}{\textbf{scatter} (60\% noise, 6 seeds)} \\
    \cmidrule(lr){2-5}\cmidrule(lr){6-8}\cmidrule(lr){9-12}
    Seed & FARO-Ad. & FARO-ERC & $\Delta$ & win
         & mem & ms Ad. & ms ERC
         & FARO-Ad. & FARO-ERC & $\Delta$ & win \\
    \midrule
    \multicolumn{12}{@{}l}{\emph{Per-seed final validation loss and cost}}\\
    seed0    & \cPriorZero & \cErcZero & \cDeltaZero & \checkmark
         & \costMem & \msAdZero & \msErcZero
         & \scPriorZero & \scErcZero & \scDeltaZero & \checkmark \\
    seed1    & \cPriorOne & \cErcOne & \cDeltaOne & \checkmark
         & \costMem & \msAdOne & \msErcOne
         & \scPriorOne & \scErcOne & \scDeltaOne & \checkmark \\
    seed2    & \cPriorTwo & \cErcTwo & \cDeltaTwo & \checkmark
         & \costMem & \msAdTwo & \msErcTwo
         & \scPriorTwo & \scErcTwo & \scDeltaTwo & --- \\
    seed3    & \cPriorSeven & \cErcSeven & \cDeltaSeven & \checkmark
         & \costMem & \msAdSeven & \msErcSeven
         & \scPriorSeven & \scErcSeven & \scDeltaSeven & \checkmark \\
    seed4   & \cPriorFortyTwo & \cErcFortyTwo & \cDeltaFortyTwo & \checkmark
         & \costMem & \msAdFortyTwo & \msErcFortyTwo
         & \scPriorFortyTwo & \scErcFortyTwo & \scDeltaFortyTwo & \checkmark \\
    seed5 & \cPriorTenTwentyFour & \cErcTenTwentyFour & \cDeltaTenTwentyFour
         & \checkmark
         & \costMem & \msAdTenTwentyFour & \msErcTenTwentyFour
         & \scPriorTenTwentyFour & \scErcTenTwentyFour & \scDeltaTenTwentyFour
         & \checkmark \\
    \midrule
    Mean & $\priorFinal$ & $\ercFinal$ & $\mainDelta$ & \mainWins
         & \costMem & \msAdMean & \msErcMean
         & $\scPriorMean$ & $\scErcMean$ & $\regimeScatterDelta$
         & \regimeScatterWins \\
    Std  & \multicolumn{2}{c}{$\pm\priorFinalStd$ \;/\; $\pm\ercFinalStd$}
         & $\pm\mainDeltaStd$ & ---
         & $0$ & $\pm\msAdStd$ & $\pm\msErcStd$
         & \multicolumn{2}{c}{---}
         & $\pm\regimeScatterStd$ & --- \\
    \bottomrule
  \end{tabular}
\end{table*}

Table~\ref{tab:main} is the central result. On clean data FARO-ERC lowers the final
loss at \mainWins{} seeds, a modest but robust gain that the tests in
Section~\ref{sec:exp:stats} confirm four ways. On scatter the reallocation does not
degrade: FARO-ERC still lowers the mean and wins \regimeScatterWins{} seeds, keeping
its footing where per-block risk is noisiest. The narrower margin there tracks the
noisier risk signal, so we read scatter as a stress test of the same controller.

\paragraph{Reading the margin.} The baseline is NorMuonAndAdam, the best
hand-designed recipe in our comparison, so the reallocation acts on an already
near-optimal split: the question is whether the budget axis still carries structure
once geometry is settled. It does: FARO-ERC improves on the human recipe at all
\mainWins{} seeds, the confidence interval excluding zero, on the one axis a scalar
controller cannot reach (Section~\ref{sec:exp:abl}). A near-optimal start leaves
little headroom, so a modest margin is expected; the improvement is systematic and
the axis---not the magnitude---is the contribution.

\begin{figure*}[t]
  \centering
  \includegraphics[width=0.92\textwidth]{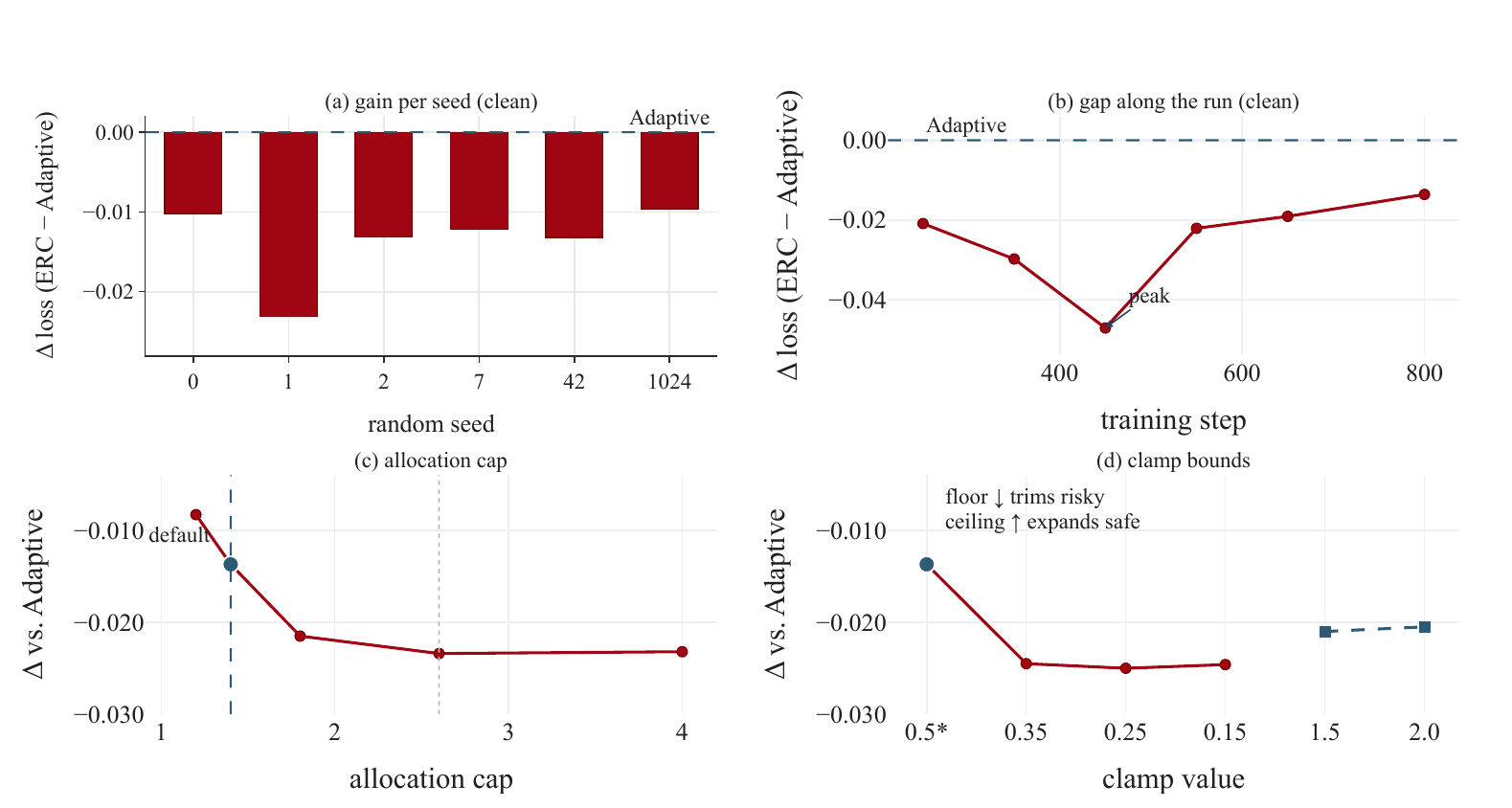}
  \caption{The paper's main analysis; axis labels abbreviate FARO-ERC and
  FARO-Adaptive as \emph{ERC} and \emph{Adaptive}. (a) Per-seed loss change on
  clean data: FARO-ERC is lower at all \mainSeeds{} seeds (bars are matched-pair
  differences; the shaded band is the $\pm\noiseHi$-nat noise floor).
  (b) Per-checkpoint gain: below zero from step \convAheadFrom, deepest at step
  \convMaxGapStep, narrowing by the end, i.e.\ faster descent rather than earlier
  arrival. (c) Allocation-cap sweep: the gain grows with the cap and saturates.
  (d) Clamp sweep: lowering the floor deepens the gain more than raising the
  ceiling, so the benefit concentrates on the risky tail.}
  \label{fig:analysis}
\end{figure*}

\subsection{The framework recovers the human prior}
\label{sec:exp:landscape}

Our first claim is that the framework works without hand-set per-layer geometry.
Against the strongest hand-designed optimizers under one harness and schedule
(Table~\ref{tab:baselines}), FARO's adaptive arm ($n{=}\faroPriorN$) reaches \faroPriorMean{} nats,
matching NorMuonAndAdam ($n{=}\baseNorMuonAndAdamN$) within $0.0014$ nats and clearly below
Muon (\baseMuon) and Adam (\baseAdam); Figure~\ref{fig:landscape} shows the curves,
indistinguishable from step $250$ on. One framework that picks each block's
geometry by measured return lands on the best human-designed recipe with no
hand-set split; FARO-ERC then improves on that arm.

\begin{figure}[t]
  \centering
  \includegraphics[width=0.78\columnwidth]{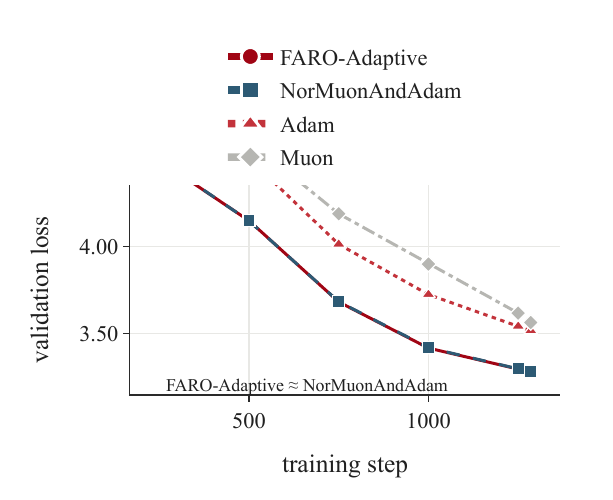}
  \caption{Learning curves on the full schedule (mean over seeds).
  FARO-Adaptive and NorMuonAndAdam overlap; Muon and Adam stay above. The framework
  recovers the hand-designed recipe without a hand-set split.}
  \label{fig:landscape}
\end{figure}

\subsection{Statistical evidence for the clean-data gain}
\label{sec:exp:stats}

We establish the clean-data gain four independent ways. The sign test
(\mainWins{}, $p=\mainSignP$), the paired $t=\mainT$, and the $95\%$ CI
$[\mainCILo,\mainCIHi]$ all clear zero, and across the $2\times\mainSeeds$ runs
the groups do not overlap. The effect is also $\noiseRatioMean\times$ the mean
replicate noise (\noiseMean) and $\noiseRatioMax\times$ its worst case
(\noiseMax), so it sits above run-to-run variance rather than inside it. These four
checks establish robustness, which is the claim.

\paragraph{Robustness.}
The comparison used a pre-committed seed and repetition budget, and no outlier
carries the win: all twelve FARO-ERC runs beat all twelve FARO-Adaptive runs
(Section~\ref{sec:exp:stats}). Table~\ref{tab:main} extends this to a corrupted
stream, where the reallocation still lowers the mean and wins \regimeScatterWins{}
seeds, a narrower margin rather than a reversal---graceful under noise.

\subsection{Convergence behaviour}
\label{sec:exp:convergence}

Lower final loss could mean reaching a target sooner or merely ending lower; the
two differ, and the discrete measure cannot tell them apart (validation is scored
every $50$ steps, coarser than the gap), so we read the trajectory. FARO-ERC sits
below FARO-Adaptive from step $\convAheadFrom$ on, peaking at step $\convMaxGapStep$
(\convMaxGap) and narrowing to \convFinalGap{} by step $800$, with run-mean loss
lower by \convMeanVal{} nats (6/6 seeds). That is a faster-descent signature rather
than an earlier arrival; Figure~\ref{fig:analysis}(b) shows it and
Table~\ref{app:conv} gives the per-checkpoint values. The lead narrows toward the
end as the schedule anneals.

\subsection{Ablations}
\label{sec:exp:abl}

Three single-knob checks locate the mechanism. \emph{The allocation cap} is the
dominant knob (Table~\ref{app:cap}): the gain rises with it (about $1.7\times$
from the default \capFourteen{} to \capTwentysix) and then saturates, because the
gain comes from clamping high-risk blocks and once the downward clamp binds, a
larger cap has nothing left to act on. Moving the ERC clamp bounds independently
(Table~\ref{app:clamp}) shows the asymmetry: lowering the floor to $\ercFloorBest$
deepens the gain further (about $1.8\times$) while raising the ceiling moves it
less, so the benefit concentrates on trimming the risky tail. A replay over logged per-block
risks agrees---the winsor bound $\kappa$ and EWMA decay $\gamma$ are near-inert,
while the cap sets the tilt directly.

\emph{Reallocation is not a learning-rate effect.} Crossing the cap with the base
LR shows no interaction (Table~\ref{app:lr}): at both caps LR $0.7\times$ and
$1.0\times$ are tied and $0.5\times$ is worse, so cap and LR are not substitutes.

\emph{The signal is cross-block, not global.} We test the alternative directly:
under the same protocol we run a fractional-Kelly rule that rescales one global
budget from a probe return \citep{kelly1956}, and DS-FARO, which rescales it by
the shortfall of a probe return below its trailing median. Both land inside the
noise floor (\negDSFaro{} and \negContKelly{} nats; \negDSFaroWins{} and
\negContKellyWins{} seed wins; Table~\ref{app:negative}). A scalar controller only
multiplies every budget by a common factor, so it cannot express which blocks
should move less; its flat behaviour next to FARO-ERC's gain points to the
cross-block information rather than the size of the global step.

\section{Conclusion}
\label{sec:conclusion}

We posed optimizer choice as a function-space decision and built FARO: each block
gets its own geometry and its own share of a fixed update budget. Geometry is chosen
by realized counterfactual return and frozen; budget is reallocated by equal risk
contribution, with winsorized risk, a capped multiplier, and a bisection solve that
preserves the total exactly. On FineWeb the FARO-Adaptive arm matches the best
hand-designed recipe (NorMuonAndAdam) with no hand-set split, and FARO-ERC improves
on it at all \mainWins{} seeds---below FARO-Adaptive through mid-training, robust
under a corrupted stream and free in time and memory. The ablations show the effect
is cross-block, from trimming high-risk blocks rather than a learning-rate change:
geometry is no longer set by hand, and the budget axis carries a systematic gain no
scalar controller can reach.

\clearpage
\section*{Limitations}
\label{sec:limitations}

Our study is at one model scale and one corpus: a \modelLayers-layer,
\modelDim{}-wide GPT-style transformer on FineWeb. Scaling up is the natural next
step. The framework's premise only becomes stronger there, because a larger model
offers a finer block structure to route over and a wider spread of per-block risk
to reallocate against.

The comparison is taken at the \expSteps{}-step point of the schedule, so the
reported losses are final losses under a fixed compute budget rather than converged
values. The allocation reads live per-block risk, so its precision tracks that
signal. On clean data the gain is systematic. On our corrupted stress stream
per-block risk is noisier, and the controller still lowers the mean loss and wins
the seed majority, with a narrower margin.

\section*{Ethical Considerations}
\label{sec:ethics}

This paper studies a training optimizer, so the risks it raises are the ordinary
ones of language-model pretraining. The main one is environmental: any method that
lowers loss per unit of compute can invite larger training runs, and our own
experiments used H800 GPU time. We keep every comparison matched on seed, data
order, and compute budget, and Table~\ref{tab:main} shows the reallocation adds no
measurable time or memory, so the reported gain is not bought with extra compute.
All experiments use public FineWeb shards and a public trainer; no private data,
human subjects, or personal data were involved. Our corpus carries the usual
web-text biases. An optimizer emits no text but shapes a model that will, so we
present the method as a training-time component and name the data source.

\section*{LLM Usage}
\label{sec:llm_usage}

We used large language models as a writing and coding aid: drafting and editing
prose, implementing and refactoring the training and analysis code, and writing the
plotting scripts. The scientific content is our own, including the problem
formulation, the routing-by-return design, the adaptation of equal risk
contribution to the frozen-budget setting, and the reading of the ablations. Every
reported number comes from our own runs, recomputed from the recorded training
logs. We verified all code that produces numbers in this paper and take
responsibility for the results and claims.

\clearpage
\bibliography{custom}

\clearpage
\appendix
\section*{Supplementary Material Overview}
\label{app:overview}

This supplementary material documents the training configuration, the data
protocol, the controller baselines, and the per-block effect of the
reallocation, so that the claims in the main paper can be checked in detail.

\begin{itemize}
  \item Appendix~\ref{app:setup}: model, optimizer, and schedule settings.
  \item Appendix~\ref{app:dataset}: corpus, validation and probe slices, and the
        synthetic corrupted regime.
  \item Appendix~\ref{app:hardware}: hardware and software environment, and the
        cost of the reported runs.
  \item Appendix~\ref{app:baseline}: the reference trainer, the risk-neutral
        FARO-Adaptive, and the scalar controllers used as negative controls.
  \item Appendix~\ref{app:casestudy}: what the allocation does to per-block
        budgets, and the boundaries of the effect.
  \item Appendix~\ref{app:theory}: the decomposition argument for the routing
        decision, and a proof that the ERC allocation step solves its risk program
        exactly.
\end{itemize}

\section{Experimental setup}
\label{app:setup}

\paragraph{Model.}
A GPT-style transformer with \modelLayers{} layers, width \modelDim{},
\modelHeads{} heads of dimension $128$, vocabulary \modelVocab{}, and sequence
length scheduled from $896$ to $2048$. FARO partitions the parameters into
$B = \blocksPerRun$ blocks, so each block is an attention projection bank, an MLP
matrix, an embedding, or a group of normalization and bias parameters. 
Seed0-5 are 0,1,2,7,42,1024.

\paragraph{Optimizers.}
Base updates follow the reference trainer \citep{modded_nanogpt_2024}: a
Muon-style orthogonalized rule for matrix blocks, with an Adam-style rule for the
remaining parameters. FARO adds per-block geometry selection over the candidate
set $\{\texttt{coordinate}, \texttt{row\_column}, \texttt{spectral}\}$ together
with the freeze-after-settling rule, plus the budget $\beta_b$ of
Eq.~\eqref{eq:update}. The candidate set is a registry, so additional arms can be
added without changing the allocation.

\paragraph{Training schedule.}
The learning-rate and weight-decay schedules are defined over \schedSteps{}
steps with a terminal cooldown to a $0.15\times$ floor. The budget experiments
run \expSteps{} steps, at which point the schedule has not fully cooled; the
reported losses are therefore final losses under a fixed \expSteps{}-step compute
budget rather than converged losses. Both arms of every comparison share the
schedule, so the comparison is unaffected.

\paragraph{ERC defaults.}
Risk-EWMA decay $\gamma = \ercBeta$, winsor bound $\kappa = \ercWinsor$,
allocation cap $c = \ercCapDefault$ for every reported result, with the larger
$\ercCapRec$ studied in the cap sweep (Table~\ref{app:cap}) and ERC clamp bounds
inherited from the global routing bounds unless overridden. The risk signal is
refreshed once per frozen step and the allocation reads only the previous step's
value.

\paragraph{Evaluation.}
Final validation cross-entropy at the last step, computed on a fixed validation
prefix so that all runs are directly comparable. Each (seed, arm) is run
\mainReps{} times and averaged; the standard deviations in Table~\ref{tab:main}
are across seeds of those per-seed averages.

\paragraph{Noise floor.}
Repeating an identical configuration gives a spread of roughly
$\noiseLo$--$\noiseHi$ nats. We treat smaller differences as indistinguishable
from noise and require the sign to agree across seeds before reporting a gain.

\paragraph{Ablation protocol.}
Each ablation varies exactly one knob---the allocation cap, the floor or ceiling
clamp, or the base learning rate---against the same FARO-Adaptive baseline at the same
\expSteps{}-step budget. Per-cell seed counts are given in the table captions.

\begin{table}[t]
  \centering
  \caption{Notation (reference for Section~\ref{sec:method}).}
  \label{app:notation}
  \footnotesize
  \setlength{\tabcolsep}{4pt}
  \begin{tabular}{@{}ll@{}}
    \toprule
    Symbol & Meaning \\
    \midrule
    $b$                       & block index, $b = 1,\ldots,B$ \\
    $B$                       & number of blocks ($\blocksPerRun$) \\
    $\theta_b$                & parameters of block $b$ \\
    $d_b(\phi_b)$             & frozen-geometry update direction \\
    $\alpha$                  & base learning rate \\
    $\bar\beta_b$             & frozen reference budget \\
    $\beta_b$                 & effective budget (decision variable) \\
    $\mu_b$                   & applied multiplier, $\beta_b \!\approx\! \bar\beta_b\mu_b$ \\
    $r_b$                     & realized risk (update-to-weight EMA) \\
    $\phi_b$                  & geometry of block $b$ \\
    $\rho_b(\phi)$            & counterfactual return of $\phi$ \\
    $\mathrm{Bud}$            & frozen total budget, $\sum_b \bar\beta_b$ \\
    $c$                       & allocation cap \\
    $\kappa$                  & winsor bound \\
    $\gamma$                  & risk-EWMA decay \\
    \bottomrule
  \end{tabular}
\end{table}

\begin{table}[h]
  \centering
  \caption{Cap $\times$ base-LR grid, $\Delta$ vs.\ FARO-Adaptive. LR $0.5\times$ uses
  $3$ seeds, the remaining cells \mainSeeds{}.}
  \label{app:lr}
  \small
  \begin{tabular}{lccc}
    \toprule
    & LR $0.5\times$ & LR $0.7\times$ & LR $1.0\times$ \\
    \midrule
    cap \capEighteen & \abCapOneEightLrZeroFive & \abCapOneEightLrZeroSeven
                     & \abCapEighteen \\
    cap \capTwentysix & \abCapTwoSixLrZeroFive & \abCapTwoSixLrZeroSeven
                      & \abCapTwentysix \\
    \bottomrule
  \end{tabular}
\end{table}

\begin{table}[t]
  \centering
  \caption{Allocation-cap sweep, $\Delta$ vs.\ FARO-Adaptive. Cap \capTwelve{} uses $3$
  seeds; the rest use \mainSeeds{}. A larger cap permits a stronger tilt.}
  \label{app:cap}
  \footnotesize
  \setlength{\tabcolsep}{3.5pt}
  \begin{tabular}{lccccc}
    \toprule
    cap & \capTwelve & \capFourteen & \capEighteen & \capTwentysix
        & \capForty \\
    \midrule
    $\Delta$ & \abCapTwelve & \abCapFourteen & \abCapEighteen
             & \abCapTwentysix & \abCapForty \\
    \bottomrule
  \end{tabular}
\end{table}

\begin{table}[t]
  \centering
  \caption{ERC clamp sweep at cap \capTwentysix. Lowering the floor deepens the
  gain; raising the ceiling moves it less. Floor rows use \mainSeeds{} seeds,
  ceiling rows $3$.}
  \label{app:clamp}
  \small
  \begin{tabular}{lc}
    \toprule
    Setting & $\Delta$ vs.\ FARO-Adaptive \\
    \midrule
    floor $0.50$ (default) & \abFloorHalf \\
    floor $0.35$           & \abFloorThirtyFive \\
    floor $0.25$           & \abFloorTwentyFive \\
    floor $0.15$           & \abFloorFifteen \\
    \midrule
    ceiling $1.5$          & \abCeilOneFive \\
    ceiling $2.0$          & \abCeilTwo \\
    \bottomrule
  \end{tabular}
\end{table}

\begin{table}[t]
  \centering
  \caption{Per-checkpoint gain (FARO-ERC $-$ FARO-Adaptive, negative is better), mean over
  \mainSeeds{} seeds. FARO-ERC is ahead at every sampled checkpoint from step
  \convAheadFrom{}; the advantage peaks mid-training and narrows.}
  \label{app:conv}
  \footnotesize
  \setlength{\tabcolsep}{2.5pt}
  \begin{tabular}{@{}lcccccc@{}}
    \toprule
    step & 250 & 350 & 450 & 550 & 650 & 800 \\
    \midrule
    $\Delta$ & \convDTwoFifty & \convDThreeFifty & \convDFourFifty
             & \convDFiveFifty & \convDSixFifty & \convDEightHundred \\
    \bottomrule
  \end{tabular}
\end{table}

\begin{figure}[t]
  \centering
  \includegraphics[width=\columnwidth]{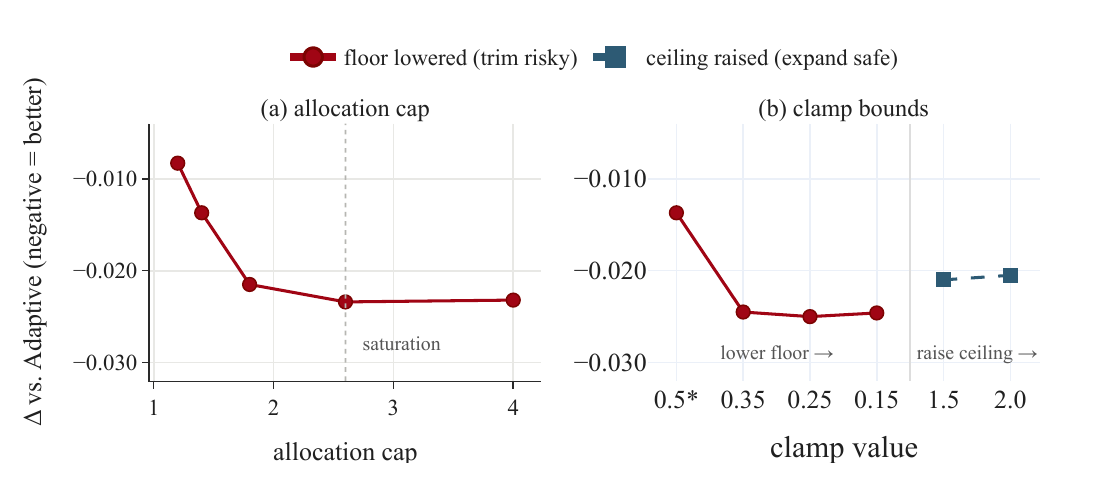}
  \caption{The two single-knob sweeps of Tables~\ref{app:cap}
  and~\ref{app:clamp}, $\Delta$ vs.\ FARO-Adaptive (negative is better). (a) Raising
  the allocation cap deepens the gain until the downward clamp binds. (b)
  Lowering the floor (red) deepens the gain, while raising the ceiling (blue)
  moves it less. Stars mark the default floor of $0.5$.}
  \label{fig:ablation}
\end{figure}

\begin{table}[t]
  \centering
  \caption{Scalar global-budget controllers under the same matched-pair
  protocol ($3$ seeds). Neither beats FARO-Adaptive, and neither wins a majority of
  seed pairs.}
  \label{app:negative}
  \small
  \begin{tabular}{lcc}
    \toprule
    Controller & $\Delta$ final & wins \\
    \midrule
    DS-FARO (difficulty-sensed) & \negDSFaro & \negDSFaroWins \\
    continuous fractional-Kelly & \negContKelly & \negContKellyWins \\
    \bottomrule
  \end{tabular}
\end{table}

\section{Dataset and splits}
\label{app:dataset}

\paragraph{Corpus.}
All experiments train on FineWeb token shards \citep{penedo2024fineweb} in the
packed \texttt{uint16} format used by the reference trainer
\citep{modded_nanogpt_2024}. The local corpus has $9$ training shards
($\approx9.0\times10^8$ tokens) and one validation shard
($\approx1.0\times10^8$ tokens).

\paragraph{Validation and probe slices.}
Validation loss uses a fixed prefix of the validation shard
($10{,}485{,}760$ tokens), held constant across runs. FARO's counterfactual probe
(used for routing, Eq.~\eqref{eq:return}) and the risk measurement read a
disjoint slice ($1{,}048{,}576$ tokens) immediately after that prefix, so no
reported validation number is computed on probe tokens.

\paragraph{Splits and fairness.}
The train/validation split is fixed. Every matched pair consumes the identical
data order and geometry/freeze timeline, and only the budget controller differs,
so the pair is a controlled comparison. No test split is used.

\paragraph{Corrupted regime.}
The ``scatter'' regime used in Table~\ref{tab:main} replaces a fixed fraction
($60\%$) of tokens at random positions, selected by a deterministic
per-microbatch hash. Difficulty is therefore not temporally clustered, which
stresses the controller under noisy risk. We use this regime as a stress test of
the controller, reported alongside the clean protocol rather than as a separate
benchmark.

\section{Hardware and software}
\label{app:hardware}

\paragraph{Hardware.}
All runs used NVIDIA H800 (80\,GB) GPUs, one GPU per run
(\texttt{torchrun} with \texttt{nproc\_per\_node}$=1$). Matched pairs were
sometimes executed concurrently on distinct GPUs; per-run wall-clock was
monitored so that GPU contention could not be mistaken for a method difference.

\paragraph{Software.}
Python 3.12, PyTorch 2.10 with CUDA 12.8, no distributed collectives inside the
training loop, and the mixed-precision settings of the reference trainer.

\paragraph{Cost.}
A single \expSteps{}-step run takes roughly $6$--$10$ minutes of H800 time. The
main table is \mainSeeds{} seeds $\times$ \mainReps{} repetitions $\times$ two
arms, and the ablation grids multiply this on the same single-GPU setting.

\paragraph{Reproducibility.}
Runs are seeded. The matched-pair design fixes the data order, the geometry
routing, and the freeze timeline across arms, so a pair differs only in the
budget controller, and per-step metrics and run configuration are written for
every run.

\section{Baselines}
\label{app:baseline}

\paragraph{Optimizer landscape.}
Table~\ref{tab:baselines} is not a head-to-head against the allocation axis; it
locates the budget study in the optimizer landscape. These optimizers were
trained to the end of the schedule and evaluated identically, so the comparison
is between whole optimizers under one recipe, not between budget controllers. The
result we draw on is that the FARO-Adaptive arm
(\faroPriorMean{} $\pm$ \faroPriorStd) matches NorMuonAndAdam
(\baseNorMuonAndAdam{} $\pm$ \baseNorMuonAndAdamStd) and clearly beats Muon and Adam: under one
framework, with no per-layer geometry hand-set, it reaches the level of the
strongest hand-designed recipe. FARO-ERC is the further step that improves on
this arm (Table~\ref{tab:main}).

\begin{table}[t]
  \centering
  \caption{Optimizer baselines on FineWeb at the end of the schedule, with the
  per-seed values behind each mean. Adam and Muon/NorMuonAndAdam use \baseAdamN{} and
  \baseMuonN{} seeds respectively; the FARO-Adaptive baseline is the same framework with the
budget controller off.
  (the matched-pair study against it, Table~\ref{tab:main}, uses \mainSeeds{}).
  Std is the population standard deviation.}
  \label{tab:baselines}
  \footnotesize
  \setlength{\tabcolsep}{4pt}
  \begin{tabular}{@{}llc@{}}
    \toprule
    Optimizer & seeds & mean $\pm$ std \\
    \midrule
    Adam    & $0,1,42,1024$ & $\baseAdam \pm \baseAdamStd$ \\
    & \multicolumn{2}{@{}l}{\footnotesize $\baseAdamPerSeed$} \\
    Muon    & $0,42,1024$   & $\baseMuon \pm \baseMuonStd$ \\
    & \multicolumn{2}{@{}l}{\footnotesize $\baseMuonPerSeed$} \\
    NorMuonAndAdam & $0,42,1024$ & $\baseNorMuonAndAdam \pm \baseNorMuonAndAdamStd$ \\
    & \multicolumn{2}{@{}l}{\footnotesize $\baseNorMuonAndAdamPerSeed$} \\
    FARO-Adaptive & $0,42,1024$ & $\faroPriorMean \pm \faroPriorStd$ \\
    & \multicolumn{2}{@{}l}{\footnotesize $\faroPriorPerSeed$} \\
    \bottomrule
  \end{tabular}
\end{table}

\paragraph{Why this comparison is not the paper's claim.}
The numbers in Table~\ref{tab:baselines} come from different training lengths and
seed counts than the main table, and involve different update rules. They are not
evidence that ERC beats a particular optimizer, and they are not a ranking of the
optimizers against each other. Each entry runs at its harness-default setting. The
variants also differ in more than the update rule: they share neither an update
cadence nor a weight-decay scaling. The table is therefore context, not a
tuning-matched comparison. The row our claim rests on is FARO-Adaptive, the same
framework with routing frozen and the budget controller off. It lands on
NorMuonAndAdam, which we therefore take as the human-prior reference.
The controlled evidence for ERC itself is the matched-pair comparison in
Table~\ref{tab:main}. This is also why we treat the matrix-optimizer family as
candidate arms inside FARO rather than as competitors. Improvements there change
the geometry, and our question is how to allocate among geometries once they are
fixed.

\paragraph{FARO-Adaptive (risk-neutral reference).}
The reference arm of the budget study is FARO-Adaptive: FARO with its
geometry-derived frozen budgets and no risk controller. It is the risk-neutral
point of the risk--reward spectrum, and every $\Delta$ in the main paper is
measured against it.

\paragraph{Scalar controllers.}
Two scalar controllers serve as negative controls, and both rescale a single
global budget. A fractional-Kelly rule multiplies the global budget by a factor
read from a probe return \citep{kelly1956}. DS-FARO multiplies it by the shortfall
of a held-out probe return below its trailing median. Both are evaluated as
matched pairs (Table~\ref{app:negative}); neither beats FARO-Adaptive, which is the
evidence that the useful signal is cross-block.

\paragraph{Reference trainer.}
The base model, data pipeline, and matrix updates follow the public
\texttt{modded-nanogpt} trainer \citep{modded_nanogpt_2024}; its schedule and
default hyperparameters are used unchanged for every part of training that is
not the budget controller.

\section{What the allocation does to the budgets}
\label{app:casestudy}

The informative case study for an optimizer is not an input--output example but
what the controller does to per-block budgets during a run.
Table~\ref{tab:erc-diag} summarizes the multipliers actually applied in a
representative run (seed $0$, clean data).

\begin{table}[t]
  \centering
  \caption{FARO-ERC allocation diagnostics for a representative run. The multiplier is
  the factor applied to a block's frozen budget: $1.0$ means unchanged, less than
  $1$ means shrunk.}
  \label{tab:erc-diag}
  \small
  \begin{tabular}{lc}
    \toprule
    Quantity & Value \\
    \midrule
    Blocks participating      & \blocksPerRun \\
    Multiplier range          & $\diagMultLo$--$\diagMultHi$ \\
    Fraction shrunk ($<1$)    & \diagShrunkPct\% \\
    Fraction expanded ($>1$)  & \diagExpandedPct\% \\
    Mean applied multiplier   & \diagMeanMult \\
    \bottomrule
  \end{tabular}
\end{table}

\paragraph{Reading the distribution.}
The allocation is budget-neutral by construction, since the capped solve
\eqref{eq:lambda_fn} preserves the total, yet the realized distribution of
multipliers is mildly asymmetric: at the default cap, \diagShrunkPct\% of blocks
are shrunk and \diagExpandedPct\% are expanded, giving a mean applied multiplier
of \diagMeanMult, i.e.\ a small net trim of the risky tail. The shrink side is the
stronger lever: lowering the floor, which lets the shrink go further, deepens the
gain more than raising the ceiling, which lets the expand go further
(Table~\ref{app:clamp}). The method therefore helps mostly by dampening the
riskiest block updates rather than by amplifying the safest ones.

\paragraph{Exactness of neutrality.}
The multiplier computation is exactly budget-neutral, verified to below
$10^{-6}$ in unit tests. The effective budgets recorded at runtime can deviate
slightly (around $10^{-2}$ relative) because the result is subsequently
re-clipped to the global routing bounds; the FARO-ERC-specific clamp exists to
relax that coupling, so that tightening either bound does not silently move the
other.

\paragraph{Behaviour under a corrupted stream.}
On the corrupted stream the realized per-block risks become noisier, so the
reallocation separates high-risk from low-risk blocks less cleanly and the margin
narrows (Table~\ref{tab:main}). The controller still lowers the mean loss and wins
\regimeScatterWins{} seeds: it degrades gracefully under noise rather than
breaking, which is the behaviour we want from a risk-driven rule.

\section{System overhead: the exploration phase is not free}
\label{app:overhead}

Automatic search costs something a fixed recipe does not. FARO spends an
exploration phase on shadow candidate updates, the held-out probe, and the
risk statistics, on top of compilation and warmup. Table~\ref{tab:overhead}
reports the full per-run system cost of the four arms on the same schedule.

\begin{table}[t]
  \centering
  \caption{Per-run system cost on the full \schedSteps{}-step schedule:
  throughput, training time (optimizer steps), wall time (incl.\ compilation and
  warmup), and peak memory.}
  \label{tab:overhead}
  \scriptsize
  \setlength{\tabcolsep}{3pt}
  \begin{tabular}{@{}lrrrrr@{}}
    \toprule
    System & tok/s & train & wall & alloc & res \\
    \midrule
    Adam            & \ohAdamTps & \ohAdamTrain & \ohAdamWall & \ohAdamAlloc & \ohAdamRes \\
    Muon            & \ohMuonTps & \ohMuonTrain & \ohMuonWall & \ohMuonAlloc & \ohMuonRes \\
    NorMuonAndAdam  & \ohNmTps & \ohNmTrain & \ohNmWall & \ohNmAlloc & \ohNmRes \\
    FARO-Adaptive   & \ohFaroTps & \ohFaroTrain & \ohFaroWall & \ohFaroAlloc & \ohFaroRes \\
    \bottomrule
    \multicolumn{6}{@{}l}{\scriptsize train/wall in s; alloc/res in MiB.} \\
  \end{tabular}
\end{table}

\paragraph{Against Muon.}
FARO-Adaptive is not slower than Muon: throughput is
\ohFaroTps{} vs \ohMuonTps{} tokens/s and training time
\ohFaroTrain{} vs \ohMuonTrain{}~s, so the extra machinery pays for itself in
step throughput. Its peak memory is higher (\ohFaroAlloc{} vs \ohMuonAlloc{}~MiB),
the price of holding the exploration state.

\paragraph{Against the human recipe.}
NorMuonAndAdam is the strongest hand-designed baseline. Against it,
FARO-Adaptive's average training time is \ohVsNmTrainPct\% higher
(\ohFaroTrain{} vs \ohNmTrain{}~s) and its peak allocated memory
\ohVsNmMemPct\% higher (\ohFaroAlloc{} vs \ohNmAlloc{}~MiB). Both sit in a range
we consider manageable. They also shrink as training stabilizes. Past the
exploration phase the steady-state step time and memory of the two arms are
close, so the gap is concentrated early rather than carried through the run.

\paragraph{Where the overhead comes from, and what bounds it.}
The additional cost is an exploration-phase cost. The reported peak is
\texttt{torch.cuda.max\_memory\_allocated()} and \texttt{reserved} over the whole
process, from compilation and warmup through the end of training, without a reset
before the first optimizer step. It therefore describes the peak the program has
to accommodate in practice, not the frozen steady-state footprint. The added
time is a small fraction of a full-scale run and is acceptable for the training
budgets we target, and the extra peak memory is a fixed resident term that can be
reduced by offloading the exploration state. We report both, rather than the
frozen steady state alone, so the cost of the automation is not hidden.

\section{A decomposition argument for routing and allocation}
\label{app:theory}

FARO makes two decisions per block: a geometry, chosen by measured return, and a
share of a fixed total budget, set by risk contribution. Neither decision is made
jointly over all blocks. For routing, this appendix bounds how far the per-block
choice can be from the joint one: the linearized return decomposes exactly, and
what the per-block choice ignores enters only at second order, through a residual
that the bound carries. For allocation, it shows that the budget coupling is
represented exactly rather than approximated, so the step is an exact solve of the
program it comes from. The two are different strengths of statement, and
Section~\ref{app:theory:joint} separates them. Neither argument claims that the
reduced decisions drive the original non-convex training run to a global optimum.

\subsection{Setup and standing assumptions}
\label{app:theory:setup}

We keep the notation of Section~\ref{sec:method}. Parameters are split into
$B=\blocksPerRun$ blocks $\theta=(\theta_1,\dots,\theta_B)$. A decision for block
$b$ is a pair $x_b=(\phi_b,\beta_b)$, a geometry and a budget $\beta_b\ge0$. The
step it induces is
\begin{equation}
  \begin{gathered}
  \Delta\theta_b(x_b)=\alpha\,\beta_b\,d_b(\phi_b),\\
  \Delta\theta(x)=\sum_{b=1}^{B}\Delta\theta_b(x_b),
  \end{gathered}
  \label{eq:th:step}
\end{equation}
with base learning rate $\alpha$ and unit direction $d_b(\phi)$. The realized
one-step loss increment on microbatch $\xi$ is
\begin{equation}
  Q_\xi(x)=\mathcal L(\theta+\Delta\theta(x);\xi)-\mathcal L(\theta;\xi).
  \label{eq:th:Q}
\end{equation}
Four assumptions hold throughout. They are mild, and each is checkable.
\begin{itemize}
  \item[\textbf{A1}] \emph{Disjoint supports.} Blocks act on non-overlapping
  parameter coordinates, so $\langle\Delta\theta_b,\Delta\theta_c\rangle=0$ for
  $b\ne c$.
  \item[\textbf{A2}] \emph{Frozen risk.} The per-block risk $r_b$ is measured from
  the previous step and held fixed while the allocation is solved, so the current
  gradient never sets the budget that acts on it.
  \item[\textbf{A3}] \emph{Separable quadratic risk.} Allocation scores a budget
  vector by the block-diagonal quadratic risk $R$ of \eqref{eq:th:R}, with each
  block's volatility read from its measured risk. Cross-block risk is not modeled.
  This is the model the method uses, not a claim that the true one-step objective is
  diagonal.
  \item[\textbf{A4}] \emph{Attainment.} Each local feasible set $\mathcal X_b$ is
  nonempty and compact, and the joint set is the product
  $\mathcal X=\prod_b\mathcal X_b$ when the guards are local.
\end{itemize}

\subsection{The linearized return decomposes exactly}
\label{app:theory:linear}

Let $g_{\xi,b}$ be the gradient block of block $b$ on microbatch $\xi$. By A1 the
first-order term separates with no further hypothesis: one backward pass through
the single global loss already delivers every $g_{\xi,b}$.

\begin{proposition}[Exact decomposition of the linearized return]
\label{prop:th:linear}
For every feasible $x$,
$-g_\xi^\top\Delta\theta(x)=\sum_b -g_{\xi,b}^\top\Delta\theta_b(x_b)$ identically,
and under A4
\begin{equation}
  \begin{gathered}
  \operatorname*{arg\,max}_{x\in\mathcal X}\,-g_\xi^\top\Delta\theta(x)\\
  =\prod_{b=1}^{B}\;\operatorname*{arg\,max}_{x_b\in\mathcal X_b}\,
   -g_{\xi,b}^\top\Delta\theta_b(x_b).
  \end{gathered}
  \label{eq:th:linear}
\end{equation}
The joint maximum equals the sum of the per-block maxima.
\end{proposition}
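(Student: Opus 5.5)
The plan is to prove the identity first and then derive the argmax factorization from it by a standard separable-maximization argument over a product set.

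\emph{Step 1: the identity.} By \eqref{eq:th:step}, $\Delta\theta(x)=\sum_b\Delta\theta_b(x_b)$. By A1, each $\Delta\theta_b(x_b)$ is supported on the coordinates of block $b$ and vanishes elsewhere. Hence
\[
g_\xi^\top\Delta\theta_b(x_b)=g_{\xi,b}^\top\Delta\theta_b(x_b),
\]
because the coordinates of $g_\xi$ outside block $b$ meet only zeros. Linearity of the inner product then gives
\[
-g_\xi^\top\Delta\theta(x)=\sum_b -g_{\xi,b}^\top\Delta\theta_b(x_b).
\]
This holds for every $x$, feasible or not. It needs only disjoint supports and the additive form of the step; A2 and A3 play no role. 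Write $f_b(x_b):=-g_{\xi,b}^\top\Delta\theta_b(x_b)$, which depends on $x_b$ alone.

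\emph{Step 2: attainment.} Under A4, $\mathcal X=\prod_b\mathcal X_b$ with each factor nonempty and compact. I need each $f_b$ to attain its maximum on $\mathcal X_b$. Since $\phi_b$ ranges over a finite candidate set, $\mathcal X_b$ is a finite union of slices. On each slice, $f_b$ is linear, hence continuous, in $\beta_b$. Weierstrass therefore applies on each slice, and the best slice gives $\max f_b$. Call the nonempty maximizer set $M_b$.

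\emph{Step 3: the factorization, by two inclusions.}

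For $\supseteq$: take $x^\star\in\prod_b M_b$. For any $x\in\mathcal X$,
\[
\sum_b f_b(x_b)\le\sum_b f_b(x^\star_b),
\]
term by term. So $x^\star$ is a joint maximizer, and the joint maximum equals $\sum_b\max f_b$.

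For $\subseteq$: suppose $x$ is a joint maximizer but some $x_c\notin M_c$. Replace $x_c$ by any element of $M_c$ and leave the other components unchanged. The product structure of $\mathcal X$ keeps the modified point feasible. The objective strictly increases, which contradicts joint optimality.

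\emph{Main obstacle.} The only delicate point is the $\subseteq$ direction. It uses exactly that $\mathcal X$ is a product, so that a single block can be swapped without leaving the feasible set. If the budget constraint \eqref{eq:budgetconstraint} were imposed, the set would no longer be a product and this swap argument would fail. The statement avoids this by restricting to local guards in A4, and I would note that explicitly, along with the fact that the budget coupling is handled separately in the allocation argument.
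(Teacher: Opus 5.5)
Your proposal is correct and matches the paper's proof: you assemble per-block maximizers into a feasible tuple and sum, then get the converse by swapping out a non-maximizing component, and your explicit derivation of the identity from A1 and your attainment step fill in details the paper leaves implicit. One small correction to your closing remark: the $\supseteq$ direction also depends on the product structure, since that is what guarantees $\prod_b M_b\subseteq\mathcal X$ (the paper says so explicitly), so the swap step is not the only place it is used.
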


\begin{proof}
Take any $x_b^\star$ attaining the $b$-th block maximum. The product domain makes
$x^\star=(x_1^\star,\dots,x_B^\star)$ feasible, and per-block optimality summed
over $b$ gives $-g_\xi^\top\Delta\theta(x)\le-g_\xi^\top\Delta\theta(x^\star)$ for
every feasible $x$. Conversely, if a component of a joint maximizer were not a
block maximizer, replacing that component would raise the joint value.
\end{proof}

The router scores the probe return of Eq.~\eqref{eq:return} by finite difference,
so it never forms the inner product below. Write
$\rho_{b,\xi}(\phi)=-\langle g_{\xi,b},d_b(\phi)\rangle$ for the first-order model of
that return at the current parameters, with $g_{\xi,b}$ the gradient block of the
loss on microbatch $\xi$. The two agree to first order when the probe and the
training microbatch are drawn from the same distribution and the step is small,
which is the regime the argument below covers. Proposition~\ref{prop:th:linear} applies to
this first-order model, and through it to the routing decision as implemented.

What such a decision ignores first appears beyond the linear term: cross-block
curvature $d_b^\top H_{bc}\,d_c$, cross-block return covariance $\Sigma_{bc}$, the
function-space cross metric $F_{bc}$, and the joint tail of the loss increments.
The next subsection bounds the first three.

\subsection{A bounded cross-block residual}
\label{app:theory:assume}

Write the one-step objective as separable local utilities plus a residual,
\begin{equation}
  \begin{gathered}
  J(x)=\sum_{b=1}^{B}J_b(x_b)+R_t(x),\\
  \sup_{x\in\mathcal X}|R_t(x)|\le\varepsilon_t ,
  \end{gathered}
  \label{eq:th:H}
\end{equation}
where the local utility is the linearized return the router actually scores,
\begin{equation}
  J_b(\phi,\beta)=\rho_b(\phi)\,\beta ,
  \label{eq:th:Jb}
\end{equation}
so that at the reference budgets the per-block maximizer of $J_b$ is
$\arg\max_\phi\rho_b(\phi)$, which is the routing rule of
Section~\ref{sec:method:setup}. Routing is evaluated with the budget held at its
reference value $\bar\beta_b$, as in FARO-Adaptive, so during routing $x_b$ reduces
to the geometry $\phi_b$ and $\mathcal X_b$ to the geometry set; the budget only
becomes a free variable in the allocation step of Section~\ref{app:theory:erc}.
Everything the linearization drops sits in $R_t$. Its quadratic part
$R_t^{\mathrm{quad}}$ collects curvature, return variance, and the function-space
metric; its tail part collects the joint tail of the loss increments. If the
projected curvature, return covariance, and function-space metric split into
block-diagonal and cross parts,
$C_H=C_H^{\mathrm{blk}}+E_H$, $\Sigma=\Sigma^{\mathrm{blk}}+E_\Sigma$, and
$C_F=C_F^{\mathrm{blk}}+E_F$, then on a trust region $\|\beta\|_2\le\varrho$ the
quadratic part is bounded by the spectral norms of the cross parts,
\begin{equation}
  |R_t^{\mathrm{quad}}(x)|\le
  \tfrac{\varrho^2}{2}\bigl(\|E_H\|_2+\varsigma\|E_\Sigma\|_2
  +\nu\|E_F\|_2\bigr),
  \label{eq:th:eps}
\end{equation}
with weights $\varsigma\ge0$ and $\nu\ge0$ on the variance and function-space
terms. The tail part is not covered by \eqref{eq:th:eps} and is left inside
$\varepsilon_t$. Two points keep \eqref{eq:th:eps} honest.
First, the bound is informative only when the cross parts are small relative to the
block-diagonal ones; it does not make them small. Second, the trust radius $\varrho$
is not free: a fixed total budget pins $\|\beta\|_2\le\|\beta\|_1=\mathrm{Bud}$, so
shrinking $\varrho$ is not available here, and the cross terms must be controlled by
the block structure rather than by a small step. Whether they are is empirical, and
$E_H$, $E_\Sigma$, and $E_F$ are measurable: finite differences, retained
cross-covariance, and activation or logit drift respectively.

\subsection{Exact and approximate composition of the routing decision}
\label{app:theory:compose}

\begin{theorem}[Exact composition]
\label{thm:th:exact}
If $R_t\equiv0$ in \eqref{eq:th:H} and $\mathcal X=\prod_b\mathcal X_b$ satisfies
A4, then
\begin{equation}
  \operatorname*{arg\,max}_{x\in\mathcal X}J(x)
  =\prod_{b=1}^{B}\operatorname*{arg\,max}_{x_b\in\mathcal X_b}J_b(x_b).
\end{equation}
Every joint maximizer is a tuple of local maximizers, and conversely.
\end{theorem}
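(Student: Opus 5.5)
With $R_t\equiv0$ the objective in \eqref{eq:th:H} is exactly $J(x)=\sum_b J_b(x_b)$. The plan is therefore to repeat the argument of Proposition~\ref{prop:th:linear} with $J_b$ in place of the linearized block return. The only facts needed are that the feasible set is the product $\mathcal X=\prod_b\mathcal X_b$ and that each local maximum is attained. Both are supplied by A4: each $J_b(\phi,\beta)=\rho_b(\phi)\beta$ is continuous on the compact set $\mathcal X_b$, which is a finite geometry set times a compact budget range. So each $\operatorname*{arg\,max}_{x_b\in\mathcal X_b}J_b$ is nonempty, and the right-hand product is nonempty as well.

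\emph{Inclusion $\supseteq$.} Take a tuple $x^\star=(x_1^\star,\dots,x_B^\star)$ of local maximizers. Because the domain is a product, $x^\star\in\mathcal X$. For any feasible $x$, each component satisfies $x_b\in\mathcal X_b$, so $J_b(x_b)\le J_b(x_b^\star)$. Summing over $b$ gives $J(x)\le J(x^\star)$, so $x^\star$ is a joint maximizer. This also shows that the joint optimal value equals $\sum_b\max J_b$.

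\emph{Inclusion $\subseteq$.} Let $\hat x$ be a joint maximizer and suppose some component $\hat x_c$ is not a local maximizer. Then there is $y_c\in\mathcal X_c$ with $J_c(y_c)>J_c(\hat x_c)$. Replace the $c$-th component of $\hat x$ by $y_c$. The product structure keeps the modified point feasible, and separability means only the $c$-th summand changes, so the joint value strictly increases. This contradicts the joint optimality of $\hat x$.

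The only step needing care is the exchange argument in the second inclusion. It depends on two facts: that swapping a single coordinate stays inside $\mathcal X$, which is why the product hypothesis in A4 is essential, and that $R_t\equiv0$, so no cross-term can move when one block changes. I would remark after the proof that both fail when the guards couple blocks. A global budget constraint such as \eqref{eq:budgetconstraint} destroys the product structure, which is why the allocation step is treated separately via a scalar price rather than by this theorem.
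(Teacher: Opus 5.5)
Your proposal is correct and matches the paper's approach: the paper proves this theorem by repeating the argument of Proposition~\ref{prop:th:linear} with $J=\sum_b J_b$. That argument shows the tuple of block maximizers is feasible because the domain is a product, and summing the per-block inequalities makes it a joint maximizer; conversely, replacing a non-maximizing component would raise the joint value, which is exactly your exchange argument.
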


The proof repeats Proposition~\ref{prop:th:linear} with $J$ in place of the linear
term. The theorem is the ideal case; the next bound covers the residual.

\begin{theorem}[$2\varepsilon$ suboptimality]
\label{thm:th:approx}
Suppose \eqref{eq:th:H} holds on a common nonempty $\mathcal X$. Let $x^\star$
maximize $J$ and let $\hat x$ maximize the factorized objective
$J_{\mathrm{fac}}=\sum_b J_b$. Then
\begin{equation}
  J(x^\star)-J(\hat x)\le2\varepsilon_t .
  \label{eq:th:2eps}
\end{equation}
\end{theorem}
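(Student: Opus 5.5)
The plan is the standard sandwich argument that compares both maximizers through the factorized objective $J_{\mathrm{fac}}=\sum_b J_b$, using only the uniform residual bound in \eqref{eq:th:H}. Since $J=J_{\mathrm{fac}}+R_t$ and $\sup_{x\in\mathcal X}|R_t(x)|\le\varepsilon_t$, we have, pointwise on $\mathcal X$,
\[
J_{\mathrm{fac}}(x)-\varepsilon_t\le J(x)\le J_{\mathrm{fac}}(x)+\varepsilon_t .
\]
Both $x^\star$ and $\hat x$ lie in the same set $\mathcal X$; this is where the hypothesis of a common nonempty domain enters. So the two inequalities can be evaluated at both points.

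The chain of inequalities is as follows. First, apply the upper envelope at $x^\star$ to get $J(x^\star)\le J_{\mathrm{fac}}(x^\star)+\varepsilon_t$. Second, use optimality of $\hat x$ for $J_{\mathrm{fac}}$ over $\mathcal X$, which gives $J_{\mathrm{fac}}(x^\star)\le J_{\mathrm{fac}}(\hat x)$. Third, apply the lower envelope at $\hat x$, which gives $J_{\mathrm{fac}}(\hat x)\le J(\hat x)+\varepsilon_t$. Concatenating the three yields $J(x^\star)\le J(\hat x)+2\varepsilon_t$, which is \eqref{eq:th:2eps}. The inequality $J(x^\star)\ge J(\hat x)$ also holds trivially, so the gap is nonnegative and bounded by $2\varepsilon_t$.

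The only points needing care are bookkeeping rather than analysis. The theorem presupposes that maximizers exist. Under A4 with a product domain this follows by compactness, provided $J_b$ and $R_t$ are upper semicontinuous; in the routing setting $\mathcal X_b$ is a finite geometry set, so attainment is automatic. If one did not want to assume attainment, the same argument run with $\delta$-maximizers gives $2\varepsilon_t+\delta$ for every $\delta>0$, and letting $\delta\to0$ recovers the bound.

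I would also remark that the bound requires only the supremum of $|R_t|$ over $\mathcal X$, not over a larger set. Consequently \eqref{eq:th:eps} plus any tail contribution can be substituted for $\varepsilon_t$ on the trust region. No step here is a genuine obstacle; the substantive difficulty, making $\varepsilon_t$ small, lies outside the theorem.
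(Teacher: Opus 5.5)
Your proof is correct and is the same argument the paper gives. Both chain three inequalities: the upper envelope $J(x^\star)\le J_{\mathrm{fac}}(x^\star)+\varepsilon_t$, optimality of $\hat x$ for $J_{\mathrm{fac}}$, and the lower envelope $J_{\mathrm{fac}}(\hat x)\le J(\hat x)+\varepsilon_t$, which together give $J(x^\star)-J(\hat x)\le 2\varepsilon_t$.
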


\begin{proof}
From \eqref{eq:th:H}, $J(x^\star)\le J_{\mathrm{fac}}(x^\star)+\varepsilon_t$.
Optimality of $\hat x$ for $J_{\mathrm{fac}}$ gives
$J_{\mathrm{fac}}(x^\star)\le J_{\mathrm{fac}}(\hat x)$, and \eqref{eq:th:H} again
gives $J_{\mathrm{fac}}(\hat x)\le J(\hat x)+\varepsilon_t$. Chaining the three
gives $J(x^\star)\le J(\hat x)+2\varepsilon_t$.
\end{proof}

\subsection{When routing is safe: a margin condition}
\label{app:theory:route}

\begin{proposition}[Route margin]
\label{prop:th:margin}
Let $\hat x$ maximize $J_{\mathrm{fac}}$, with joint geometry
$\hat{\boldsymbol\phi}$, and let
\begin{equation}
  m=\max_{x\in\mathcal X}J_{\mathrm{fac}}(x)
   -\max_{x\in\mathcal X:\,\boldsymbol\phi(x)\ne\hat{\boldsymbol\phi}}J_{\mathrm{fac}}(x)
\end{equation}
be the factorized margin over every other joint geometry. If $m>2\varepsilon_t$,
then every maximizer of the true objective $J$ carries the geometry
$\hat{\boldsymbol\phi}$.
\end{proposition}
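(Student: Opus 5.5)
The plan is a contradiction argument that reuses the sandwich from Theorem~\ref{thm:th:approx}. The residual bound~\eqref{eq:th:H} controls $J$ and $J_{\mathrm{fac}}$ uniformly. So any point whose true value is near-optimal must also be near-optimal for $J_{\mathrm{fac}}$. The margin condition then forbids such a point from carrying a geometry other than $\hat{\boldsymbol\phi}$.

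Concretely, I fix any maximizer $x^\star$ of $J$ on $\mathcal X$; one exists by A4 together with the attainment already assumed in Theorem~\ref{thm:th:approx}. Suppose, for contradiction, that $\boldsymbol\phi(x^\star)\ne\hat{\boldsymbol\phi}$. Then $x^\star$ lies in the set $\{x\in\mathcal X:\boldsymbol\phi(x)\ne\hat{\boldsymbol\phi}\}$, so by the definition of $m$,
\begin{equation*}
  J_{\mathrm{fac}}(x^\star)\le \max_{x\in\mathcal X}J_{\mathrm{fac}}(x)-m
  = J_{\mathrm{fac}}(\hat x)-m .
\end{equation*}
Next I chain three inequalities. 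Applying~\eqref{eq:th:H} at $x^\star$ gives $J(x^\star)\le J_{\mathrm{fac}}(x^\star)+\varepsilon_t$. Combining with the display gives $J(x^\star)\le J_{\mathrm{fac}}(\hat x)-m+\varepsilon_t$. Applying~\eqref{eq:th:H} at $\hat x$ gives $J_{\mathrm{fac}}(\hat x)\le J(\hat x)+\varepsilon_t$. Together these yield $J(x^\star)\le J(\hat x)-(m-2\varepsilon_t)<J(\hat x)$, because $m>2\varepsilon_t$. This contradicts the optimality of $x^\star$ for $J$. Hence every maximizer of $J$ carries $\hat{\boldsymbol\phi}$.

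The main obstacle is not the inequality chain but making the objects well defined. The second maximum in the definition of $m$ must be attained or, failing that, read as a supremum. If that set is empty, then $m=+\infty$ and the claim is trivial, because every feasible point has geometry $\hat{\boldsymbol\phi}$. A4 and the finiteness of the geometry set $\{\texttt{coordinate},\texttt{row\_column},\texttt{spectral}\}^B$ make the set a finite union of compact slices. With $J_b$ continuous in $\beta$, the supremum is therefore a maximum. The argument above, however, only uses the supremum bound, so attainment is not essential.

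A second subtlety is that $\hat{\boldsymbol\phi}$ must be unambiguous. The condition $m>2\varepsilon_t\ge0$ forces every maximizer of $J_{\mathrm{fac}}$ to share the same geometry. So the choice of $\hat x$ does not matter, and I would note this explicitly before the contradiction step.
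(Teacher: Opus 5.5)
Your proposal is correct and follows the paper's proof. Both arguments apply \eqref{eq:th:H} at a point $x$ with $\boldsymbol\phi(x)\ne\hat{\boldsymbol\phi}$ and at $\hat x$, and combine this with $J_{\mathrm{fac}}(x)\le J_{\mathrm{fac}}(\hat x)-m$ to get $J(\hat x)-J(x)\ge m-2\varepsilon_t>0$. The paper states this directly for every such $x$ rather than by contradiction on a fixed maximizer, so your appeal to the existence of a maximizer of $J$ is unnecessary: the claim holds vacuously if no maximizer exists.
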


\begin{proof}
For any $x$ with $\boldsymbol\phi(x)\ne\hat{\boldsymbol\phi}$, \eqref{eq:th:H}
gives $J(\hat x)\ge J_{\mathrm{fac}}(\hat x)-\varepsilon_t$ and
$J(x)\le J_{\mathrm{fac}}(x)+\varepsilon_t\le J_{\mathrm{fac}}(\hat x)-m+\varepsilon_t$.
Hence $J(\hat x)-J(x)\ge m-2\varepsilon_t>0$.
\end{proof}

Proposition~\ref{prop:th:margin} gives the freeze rule a meaning: a route may be
frozen when the incumbent--challenger margin covers the residual, not merely when
the ranking stops moving. Our implementation uses the settling window as a cheap
proxy for that margin.

\subsection{The allocation step is an exact dual decomposition}
\label{app:theory:erc}

Routing fixes the geometry of every block. Allocation then chooses the budget
vector $\beta$, and unlike routing it faces a constraint that couples all blocks at
once, the fixed total $\sum_b\beta_b=\mathrm{Bud}$ of
Eq.~\eqref{eq:budgetconstraint}. That constraint is not a product domain, so
A4 does not apply to it. The next result shows the coupling enters through exactly
one scalar, and that solving for that scalar recovers the exact optimum rather than
an approximation of it.

Under A2 and A3, take the allocation objective to be the separable quadratic risk
\begin{equation}
  R(\beta)=\sum_{b=1}^{B}\frac{\tilde r_b}{2\bar\beta_b}\,\beta_b^{2},
  \label{eq:th:R}
\end{equation}
read as the risk of a diagonal portfolio whose position in block $b$ is $\beta_b$,
whose volatility is the winsorized risk $\tilde r_b$ of Eq.~\eqref{eq:winsor}, and
whose reference weight is $\bar\beta_b$. FARO-ERC solves
\begin{equation}
  \begin{gathered}
  \min_{\beta}\;R(\beta)
  \quad\text{s.t.}\quad
  \sum_{b=1}^{B}\beta_b=\mathrm{Bud},\\
  \beta_b\in\bigl[\bar\beta_b/c,\;c\,\bar\beta_b\bigr],
  \end{gathered}
  \label{eq:th:primal}
\end{equation}
the box being the allocation cap of Eq.~\eqref{eq:erc}.

\begin{theorem}[ERC bisection solves \eqref{eq:th:primal} exactly]
\label{thm:th:erc}
Under A2 and A3, with $c>1$ so that the box in \eqref{eq:th:primal} is nonempty,
\eqref{eq:th:primal} has a unique solution, and it is
\begin{equation}
  \begin{gathered}
  \beta_b^\star=\bar\beta_b\,
  \operatorname{clip}\!\Big(\frac{\lambda^\star}{\tilde r_b},\,c^{-1},\,c\Big),\\
  \lambda^\star \text{ solves } \Lambda(\lambda^\star)=\mathrm{Bud},
  \end{gathered}
  \label{eq:th:ercsol}
\end{equation}
with $\Lambda$ as defined in Eq.~\eqref{eq:lambda_fn} and $\tilde r_b$ the
winsorized risk of Eq.~\eqref{eq:winsor}. Algorithm~\ref{alg:erc} therefore returns
the exact optimizer of \eqref{eq:th:primal}, not a heuristic approximation of it.
\end{theorem}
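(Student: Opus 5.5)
The argument is a standard KKT/Lagrangian analysis of a strictly convex separable quadratic over a box with one linear coupling constraint. The single multiplier of that constraint plays the role of $\lambda^\star$.

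\emph{Step 1: existence and uniqueness.} We assume $\bar\beta_b>0$ and $\tilde r_b>0$. Winsorization guarantees the latter once the median $m>0$, because every $\tilde r_b\ge m/\kappa$. Then $R$ is a sum of strictly convex one-dimensional quadratics with curvature $\tilde r_b/\bar\beta_b>0$, so $R$ is strictly convex. The feasible set is the box $\prod_b[\bar\beta_b/c,\,c\bar\beta_b]$ intersected with the hyperplane $\sum_b\beta_b=\mathrm{Bud}$. It is compact, and it is nonempty because $\bar\beta$ itself lies in it: $c>1$ gives $\bar\beta_b/c<\bar\beta_b<c\bar\beta_b$, and $\sum_b\bar\beta_b=\mathrm{Bud}$ by definition. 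A strictly convex continuous function on a nonempty compact convex set has exactly one minimizer.

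\emph{Step 2: characterize the minimizer through one multiplier.} The constraints are affine, so Slater-type qualification holds automatically and KKT is necessary and sufficient. We dualize only the coupling constraint with multiplier $\lambda$ and keep the box as the domain. The Lagrangian then separates into
\[
\sum_b\Big(\tfrac{\tilde r_b}{2\bar\beta_b}\beta_b^2-\lambda\beta_b\Big)+\lambda\,\mathrm{Bud}.
\]
Each one-dimensional box-constrained problem has the unconstrained stationary point $\beta_b=\lambda\bar\beta_b/\tilde r_b$. Because the objective is a convex parabola on an interval, its minimizer is the projection of that point onto the interval:
\[
\beta_b(\lambda)=\bar\beta_b\operatorname{clip}(\lambda/\tilde r_b,c^{-1},c).
\]
Summing gives $\sum_b\beta_b(\lambda)=\Lambda(\lambda)$, exactly as in Eq.~\eqref{eq:lambda_fn}. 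Primal feasibility then reads $\Lambda(\lambda)=\mathrm{Bud}$. KKT sufficiency shows that any such $\lambda$ yields an optimal $\beta$, and by Step 1 that $\beta$ is the unique solution. This is formula \eqref{eq:th:ercsol}.

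\emph{Step 3: a root exists and the bisection finds it (the main obstacle).} Each summand of $\Lambda$ is continuous and non-decreasing in $\lambda$. For $\lambda\le\min_b\tilde r_b/c$ we get $\Lambda=\mathrm{Bud}/c<\mathrm{Bud}$. For $\lambda\ge c\max_b\tilde r_b$ we get $\Lambda=c\,\mathrm{Bud}>\mathrm{Bud}$. The intermediate value theorem therefore gives a root in an explicit bracket, and bisection on a continuous monotone function converges to a point of the root set. The delicate point is that $\Lambda$ may be flat, so the root set can be an interval. I would show that on any interval where $\Lambda$ is constant, every summand is constant, since each is monotone and they sum to a constant. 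So $\beta(\lambda)$ is identical across the whole root set, and any root the bisection returns gives the same $\beta^\star$. This is consistent with the uniqueness from Step 1.

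Finally, the statement is about the vector $\beta^\star$, namely the multipliers before the optional outer clamp to $[\beta^{\mathrm{erc}}_{\min},\beta^{\mathrm{erc}}_{\max}]$. I would note that Algorithm~\ref{alg:erc} matches \eqref{eq:th:ercsol} when that clamp is inactive, and that the theorem concerns the multiplier computation. Exactness is then up to the bisection tolerance.
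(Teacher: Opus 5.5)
Your proposal is correct and follows essentially the same argument as the paper. Uniqueness comes from strict convexity. The separable Lagrangian has per-block minimizer $\bar\beta_b\operatorname{clip}(\lambda/\tilde r_b,c^{-1},c)$, a root of $\Lambda(\lambda)=\mathrm{Bud}$ exists by monotonicity and the intermediate value theorem, and every root, when the root set is non-unique, returns the same $\beta^\star$. Your additions are sound refinements that the paper leaves implicit or only mentions after the proof: feasibility via $\bar\beta$ itself, positivity of $\tilde r_b$ from winsorization, the explicit bisection bracket, and the caveat that the outer ERC clamp in Algorithm~\ref{alg:erc} can break exactness when it binds.
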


\begin{proof}
The objective \eqref{eq:th:R} is strictly convex on the box and the feasible set is
convex and compact, so the solution exists and is unique. Associate the multiplier
$\lambda$ with the total constraint and $\mu_b^\pm\ge0$ with the box. The
Lagrangian is separable across $b$, and stationarity in $\beta_b$ gives
$\tilde r_b\beta_b/\bar\beta_b=\lambda+\mu_b^+-\mu_b^-$. Solving the scalar problem
per block yields $\beta_b=\bar\beta_b\operatorname{clip}(\lambda/\tilde r_b,c^{-1},c)$:
interior blocks satisfy $\beta_b=\bar\beta_b\lambda/\tilde r_b$, and blocks at a
bound are pinned to it. The total constraint reduces to $\Lambda(\lambda)=\mathrm{Bud}$,
and $\Lambda$ is continuous and non-decreasing in $\lambda$ with
$\Lambda(0)=c^{-1}\mathrm{Bud}$ and $\Lambda(\lambda)\to c\,\mathrm{Bud}$ as
$\lambda\to\infty$. Since $c>1$, a root exists. $\Lambda$ is non-decreasing, so the
root may be non-unique when the box saturates every block, but $\beta^\star$ is
unique by strict convexity and every root returns it. Bisection recovers one.
\end{proof}

The theorem has three consequences that the paper relies on. First, the budget
coupling is carried by the single scalar $\lambda^\star$, so the allocation
decomposes into $B$ closed-form scalar solves plus one one-dimensional root; no
cross-block covariance is estimated. This is not a scalar controller. A scalar
controller multiplies every budget by one common factor, so it leaves the ratios
$\beta_b/\beta_c$ unchanged and cannot move mass between blocks. Here $\lambda^\star$
is a price attached to the total constraint, and each block responds to it through
its own risk, $\beta_b=\bar\beta_b\lambda^\star/\tilde r_b$; the ratios therefore
change, in the direction the risks dictate. Second, the optimum equalizes risk
contributions. Write $s_b(\beta):=\tilde r_b\beta_b/\bar\beta_b$ for block $b$'s
contribution under the model, so that $\partial_bR=s_b$. Every interior block is
assigned the same contribution $\lambda^\star$, and its budget is proportional to
$\bar\beta_b/\tilde r_b$; when the reference budgets coincide this is the classical
equal-risk-contribution condition $\tilde r_b\beta_b=\text{const}$. Third, the
contributions that are not equal are exactly the ones the cap pins: a block at its
lower bound contributes less than $\lambda^\star$ and one at its upper bound
contributes more. Equal risk contribution is thus derived from the KKT condition
rather than imposed.

Two features of the implementation follow from the same proof, and they act in
different places. Winsorizing $r_b$ into $\tilde r_b$ sets the coefficients of $R$,
so it defines the model rather than restricting it; it is what bounds the ratio
$\lambda^\star/\tilde r_b$ and hence how far one block's budget can move relative to
another's. The cap $c$ restricts the feasible box instead. Theorem~\ref{thm:th:erc}
is exact for the model and the box the method actually uses, so neither feature is
a correction applied after the fact. The optional ERC-specific clamp
$[\beta^{\mathrm{erc}}_{\min},\beta^{\mathrm{erc}}_{\max}]$ of
Section~\ref{sec:method:erc} tightens the box further and is exact for the same
program; when it binds, the applied budgets are re-clipped and neutrality holds only
to the tolerance measured in Appendix~\ref{app:casestudy}.

\subsection{What the two results cover, and what they do not}
\label{app:theory:joint}

The two decisions are covered at different strengths, and it is worth separating
them.

For routing, the statement is a bound on the loss. With budgets held at their
reference values, as in FARO-Adaptive, the decision is a geometry choice over a
product domain, so Theorem~\ref{thm:th:approx} applies directly: the routed decision
is $2\varepsilon_t$-suboptimal for the joint one-step objective, and
Proposition~\ref{prop:th:margin} says a route whose margin clears $2\varepsilon_t$
is not overturned. The objective here is the one-step loss increment that the
update induces, not the final validation loss; the two are linked by the training
dynamics, which this appendix does not model.

For allocation, the statement is exactness, not a loss bound.
Theorem~\ref{thm:th:erc} shows that the bisection returns the exact optimizer of
the risk program \eqref{eq:th:primal}, so the allocation step is not a heuristic and
the equal-risk-contribution property is a consequence rather than an assumption.
What the theorem does not do is identify the risk program with the loss: $R$ is a
risk potential, not $-J_{\mathrm{fac}}$, and no argument here bounds
$\max J_{\mathrm{fac}}$ over the simplex by $\min R$. So the composition of the two
steps does not inherit a $2\varepsilon_t$ loss bound. The evidence that minimizing
$R$ also lowers the final loss is the matched-pair comparison in
Section~\ref{sec:exp:main} plus the cap-by-LR ablation, and that is where we
rest it.

Two points follow about the budget constraint itself. The constraint
$\sum_b\beta_b=\mathrm{Bud}$ is a coupling, not something that disappears when the
total is fixed; fixing a total is what creates it. What Theorem~\ref{thm:th:erc}
establishes is that this coupling is represented exactly, by one price, rather than
approximated. Had we instead split the total into fixed per-block capacities $A_b$
with $\sum_bA_b\le\mathrm{Bud}$, the factorized feasible set would be a strict subset
of the coupled one and we would pay a pre-assignment loss that
\eqref{eq:th:eps} does not see; the exact dual avoids that particular loss. It does
not avoid the proxy question above, which is a different gap.

\subsection{Scope}
\label{app:theory:scope}

What the argument supports. The linearized return decomposes exactly, so scoring
each block on its own probe return is not an approximation at first order. Under a
bounded cross-block residual the per-block routing optimum is
$2\varepsilon_t$-suboptimal for the joint one-step objective, and a route whose
margin clears $2\varepsilon_t$ is not overturned by the coupling. The allocation
step is the exact solution of a separable risk program, with the budget constraint
carried by one scalar price and equal risk contributions following from it.

What the argument does not support. It gives no end-to-end convergence claim for
the non-convex run, and no claim that FARO-ERC beats a particular hand-designed
optimizer. It does not establish that $\varepsilon_t$ is small in our runs; that is
empirical, and the residual is measurable but we do not estimate it here. The
allocation result is exactness of the solve, not a loss bound: no step here relates
$\min R$ to $\max J_{\mathrm{fac}}$, so the two decisions do not compose into a
single $2\varepsilon_t$ statement, and the case for allocation lowering the loss
rests on the experiments. The risk signal is the measured update-to-weight ratio of
Eq.~\eqref{eq:uwr}, a proxy for volatility, not a modeled tail. A tail-aware variant
would need its own analysis: CVaR is subadditive, so local tail budgets do give a
conservative joint bound, but that bound is a different guarantee than the one
proved here, and we do not claim it.

\end{document}